\newtheorem{theorem}{Theorem}
\title{Mixture of Length and Pruning Experts for Knowledge Graphs Reasoning}
\author{Enjun Du, Siyi Liu, Yongqi Zhang\thanks{~Corresponding author} \\
  The Hong Kong University of Science and Technology (Guangzhou) \\
  Hong Kong, SAR, China \\
  \texttt{\{EnjunDu.cs, ssui.liu1022\}@gmail.com, yzhangee@connect.ust.hk}}
\begin{document}
    \maketitle
\begin{abstract}
Knowledge Graph (KG) reasoning, which aims to infer new facts from structured knowledge repositories, plays a vital role in Natural Language Processing (NLP) systems. Its effectiveness critically depends on constructing informative and contextually relevant reasoning paths. However, existing graph neural networks (GNNs) often adopt rigid, query-agnostic path-exploration strategies, limiting their ability to adapt to diverse linguistic contexts and semantic nuances.
To address these limitations, we propose \textbf{MoKGR}, a mixture-of-experts framework that personalizes path exploration through two complementary components: (1) a mixture of length experts that adaptively selects and weights candidate path lengths according to query complexity, providing query-specific reasoning depth; and (2) a mixture of pruning experts that evaluates candidate paths from a complementary perspective, retaining the most informative paths for each query.
Through comprehensive experiments on diverse benchmark, MoKGR demonstrates superior performance in both transductive and inductive settings, validating the effectiveness of personalized path exploration in KGs reasoning.
\end{abstract}

\section{Introduction} \label{Introduction}

Knowledge Graphs (KGs) are integral to Natural Language Processing (NLP), offering structured knowledge representations crucial for various language understanding and generation tasks~\cite{ji2021survey, liang2024survey}. In KGs, entities and their semantic associations are systematically encoded as relational triples~(\textit{subject, relation, object})~\cite{ali2022knowledge,sun2021rotate}, often derived from or used to interpret textual data. These triples form semantic networks that capture intricate connectivity and meaning~\cite{nickel2015review,ji2022survey,wang2023enhancing}, thereby enabling advanced NLP applications like sophisticated question answering and semantic search. A KG query can be formulated via a function $\mathcal{Q}$ as $\mathcal{Q}(e_q, r_q) = e_a$, where $e_q$, $r_q$, and $e_a$ represent the query entity, query relation, and answer entity, respectively.

Various approaches have been developed to conduct effective and efficient KG reasoning~\cite{ConvE,QuatE,zhang2020autosf}. A primary focus of this research is the generation and encoding of effective reasoning paths. Included methods that learn logical rules for path generation \cite{RLogic, RNNLogic, DRUM}, or employ reinforcement learning to discover paths based on query conditions \cite{MINERVA}. With the advent of Graph Neural Networks (GNNs), recent studies like NBFNet~\cite{zhu2021neural} and RED-GNN~\cite{RED-GNN} iteratively aggregate and encode all reasoning paths of a certain length $\ell$. 
To reduce computational complexity, subsequent approaches introduce path pruning strategies~\cite{zhu2023astarnet, Adaprop}. 
However, most current GNN-based approaches utilize rigid, query-agnostic path encoding and pruning strategies, resulting in two primary limitations:

\begin{figure*}[h]
  \centering
  \begin{subfigure}[b]{0.26\textwidth}
    \centering
    \includegraphics[height=3.2cm, width=0.95\textwidth]{./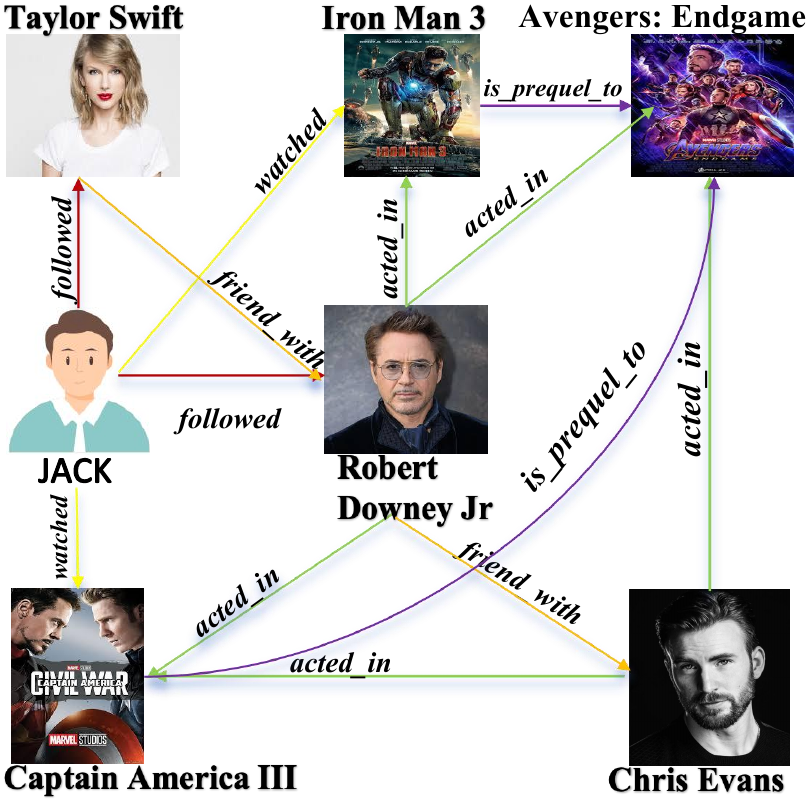}
    \caption{Knowledge Graph}
    \label{fig:1a} % 子图a标签
  \end{subfigure}%
  \begin{subfigure}[b]{0.35\textwidth}
    \centering
    \includegraphics[height=3.2cm, width=0.95\textwidth]{./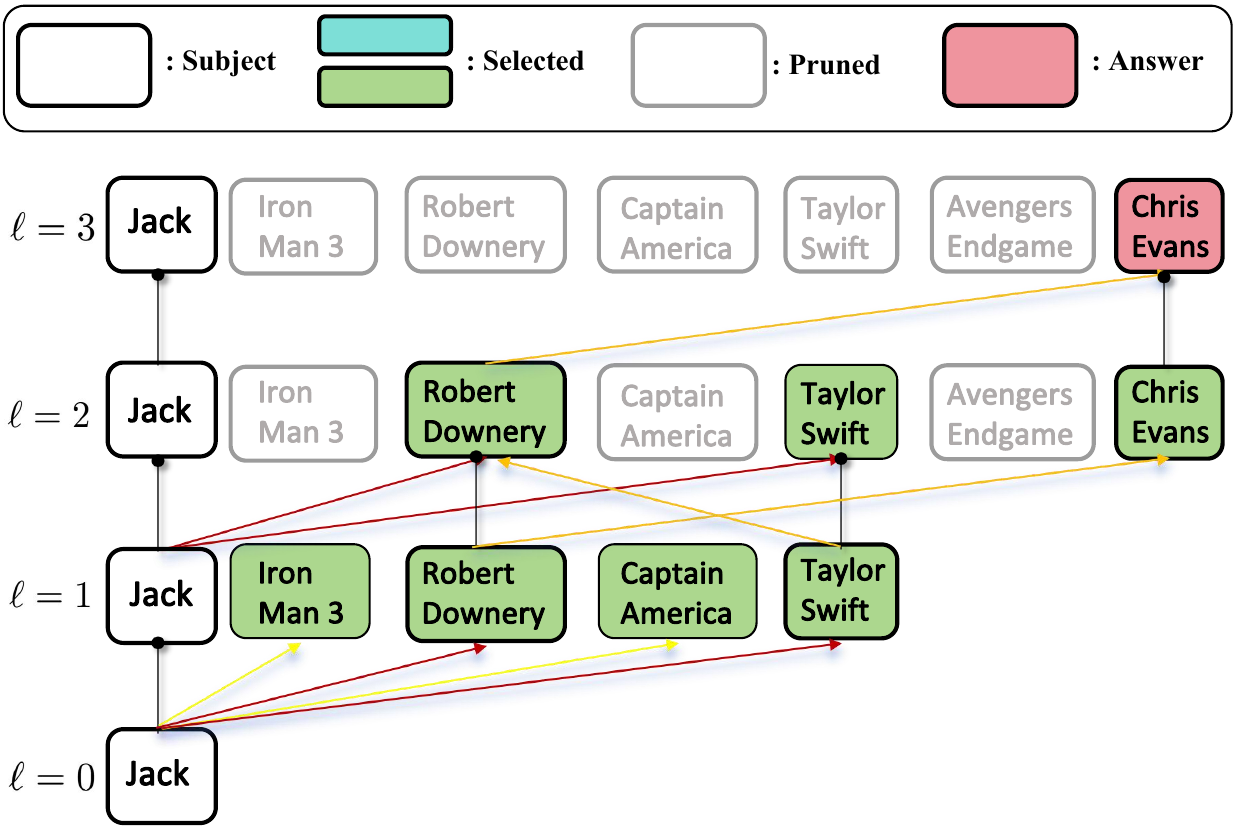}
    \caption{\textit{(Jack, followed, ?)}}
    \label{fig:1b} % 子图c标签
  \end{subfigure}
  \begin{subfigure}[b]{0.35\textwidth}
    \centering
    \includegraphics[height=3.2 cm, width=0.95\textwidth]{./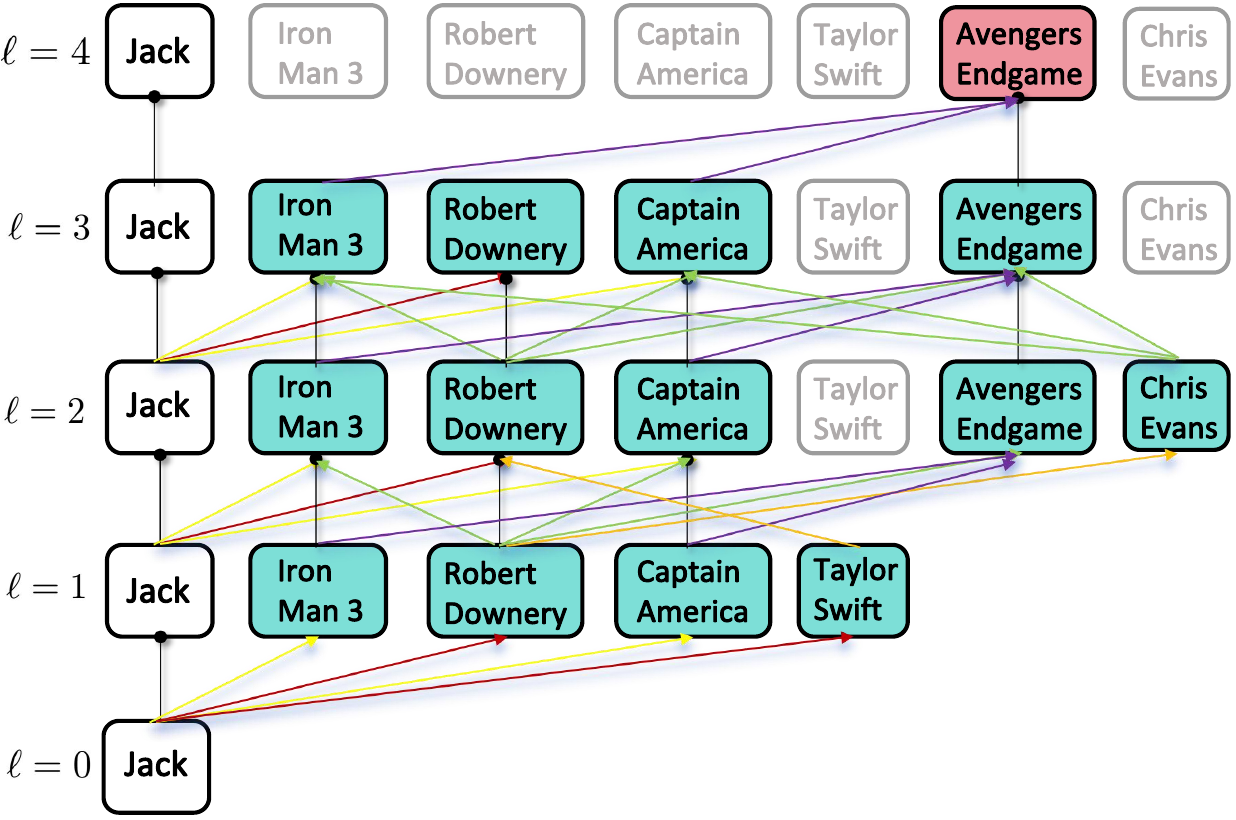}
    \caption{\textit{(Jack, watched, ?)}}
    \label{fig:1c} % 子图c标签
  \end{subfigure}
  %{-5pt}
  \caption{(a) A complex knowledge graph with two queries—\textit{(JACK, followed, ?)} and \textit{(JACK, watched, ?)}—and their respective answers \textit{Chris Evans} and \textit{Avengers: Endgame}. (b) and (c) visualize MoKGR’s personalized path exploration for each query, highlighting its adaptive path length selection and expert-guided pruning, which result in distinct retained paths and entities during reasoning.
}

% 
% (a) describes a known complex knowledge graph, accompanied by two queries \textit{(JACK, followed, ?)} and \textit{(JACK, watched, ?)} with their respective answers \textit{Chris Evans} and \textit{Avengers: Endgame}. (b) and (c) show the path exploration visualization routes of MoKGR for two queries during the reasoning process.
  \label{fig:c&xi}
  %{-9pt}
\end{figure*}

\begin{itemize}[leftmargin=*]
    \item \textbf{Disregarding Dynamic Query Requirements.} 
    Existing methods often employ a fixed hop count for path construction, failing to adapt to the dynamic requirements of individual queries. This uniform approach extends paths to the same depth for every query, overlooking the fact that optimal reasoning paths inherently vary.
    % regardless of the actual complexity needed. Such rigidity overlooks the fact that optimal reasoning paths inherently vary. 
    For instance, as illustrated in Fig.~\ref{fig:c&xi}, resolving the query \textit{(JACK, followed, ?)} to find \textit{Chirs Evans} naturally concludes within three hops. Conversely, addressing the query \textit{(JACK, watched, ?)} to identify \textit{Avengers: Endgame} might require exploration beyond three hops to capture critical relationships. 
    These scenarios underscore the need for hop-level personalization, tailoring path exploration to query complexity to improve reasoning efficiency and accuracy.
    
    \item \textbf{Oversimplified Path Exploration Strategies.} 
    Existing methods often rely on overly simplified path exploration strategies, treating all paths equivalently. Although pruning advancements like AdaProp~\cite{Adaprop} and A*Net~\cite{zhu2023astarnet} enhance efficiency, their pruning criteria remain largely uniform, neglecting the distinct significance of individual paths. Effective path exploration should integrate two complementary aspects: (1) \textbf{structural patterns}, capturing entity importance through multi-dimensional assessments to accurately identify high-quality paths; and (2) \textbf{semantic relevance}, assessing the degree of entity-query association, with paths featuring highly relevant entities more likely yielding correct answers. Proper consideration of these aspects can significantly enhance reasoning path quality.
\end{itemize}

To address these limitations, we propose a novel framework called the \textbf{M}ixture \textbf{o}f Length and Pruning Experts for \textbf{K}nowledge \textbf{G}raph \textbf{R}easoning (\textbf{MoKGR}). 
As illustrated in Fig.~\ref{fig:1b} and Fig.~\ref{fig:1c}, MoKGR introduces personalization into path exploration via two complementary innovations. 
First, it employs an \textbf{adaptive length-level selection mechanism}, which functions as a mixture of length experts, dynamically assigning importance weights to various path lengths based on individual queries. This allows shorter paths to be selected when adequate, avoiding unnecessary exploration depth. 
Second, MoKGR utilizes \textbf{specialized pruning experts} that analyze diverse properties: global importance through prediction scores, local structural patterns via attention mechanisms, and semantic relationships through entity-query similarity. Thus, MoKGR comprehensively incorporates structural and semantic considerations into path pruning, ensuring robust and query-specific reasoning paths.
The contributions can be summarized as follows:

% To address these limitations, we propose a novel framework called the \textbf{M}ixture \textbf{o}f Length and Pruning Experts for \textbf{K}nowledge \textbf{G}raph \textbf{R}easoning (\textbf{MoKGR}),  
% which, as illustrated in Fig.~\ref{fig:1c}, introduces personalization into reasoning path exploration through two complementary approaches. 
% First, MoKGR enables personalized path length through an adaptive length-level selection mechanism that reweights the importance of different lengths for different queries.
% This dynamic path exploration allows the selection of shorter length paths
% and avoid redundant depth exploration.
% Second, MoKGR refines these reasoning paths through specialized pruning experts that analyze different aspects of properties:
% one analyzes global importance through prediction scores, another captures local structural patterns via attention mechanisms, and a third assesses semantic relationships through entity-query similarity. 

\begin{itemize}[leftmargin=*]

\item 
We propose a personalized path exploration strategy for knowledge graph reasoning that adapts to query-specific requirements and entity characteristics, thereby enabling tailor-made reasoning paths without relying on predefined or static relation selection strategies.

\item 
% We implement this personalization through a novel mixture-of-experts framework, which includes both adaptive length-level weighting and personalized pruning strategies, effectively addressing the limitations of using fixed path length and uniform path exploration.
We introduce a novel mixture-of-experts framework that facilitates personalization in knowledge graph reasoning. The incorporating of both adaptive length-level weighting and personalized pruning strategies effectively addresses the critical limitations of fixed path length and uniform path exploration.

\item 
% Experimental results on benchmark datasets show that our personalized approach achieves superior reasoning accuracy and computational efficiency, while demonstrating enhanced adaptability across different query types through its mixture-of-experts framework, consistently outperforming existing methods in both transductive and inductive settings.
Experimental results on transductive and inductive datasets highlight MoKGR's achievement of superior reasoning accuracy and computational efficiency, enabling it to consistently outperform existing state-of-the-art methods.
\end{itemize}

\section{RELATED WORKS}

\subsection{Path-based Methods for KGs Reasoning}

Path-based reasoning methods aim to construct effective reasoning paths for predicting answer entities through the query function $\mathcal{Q}(e_q, r_q) = e_a$. These methods can be broadly categorized into traditional path reasoning approaches and more recent GNN-based methods.
\paragraph{Traditional Path Reasoning.}
Early path reasoning methods primarily rely on reinforcement learning and rule-based approaches. MINERVA~\cite{MINERVA} pioneered the use of reinforcement learning, training an agent to autonomously traverse the graph from query entity $e_q$ to potential answers. However, these RL-based approaches often face challenges due to the inherent sparsity of KGs. As an alternative direction, rule-based methods focus on learning logical rules for path generation. DRUM~\cite{DRUM} employs bidirectional LSTM to capture sequential patterns and enable end-to-end rule learning, while RLogic~\cite{RLogic} combines deductive reasoning with representation learning through recursive path decomposition. Despite their contributions, these methods typically focus on sequential pattern extraction without considering personalized path exploration requirements.

\paragraph{GNN-based Path Reasoning.}
More recently, GNN-based methods have achieved superior performance by effectively leveraging the rich structural information preserved in graphs. Methods such as NBFNet~\cite{zhu2021neural} and RED-GNN~\cite{RED-GNN} construct reasoning paths by iteratively aggregating information from the $\ell$-length neighborhood of the query entity $e_q$. To enhance path quality, various optimization techniques have been proposed. A*Net~\cite{zhu2023astarnet} and AdaProp~\cite{Adaprop} introduce pruning mechanisms based on priority functions and score-based filtering respectively. One-Shot-Subgraph~\cite{one-shot-subgraph} improves efficiency by utilizing PPR scores for preliminary path exploration and pruning. However, these approaches typically employ rigid path exploration strategies with fixed length distances and uniform pruning criteria, limiting their adaptability to query-specific requirements in practical scenarios.

\subsection{Mixture of Experts}

The Mixture-of-Experts (MoE) paradigm represents a divide-and-conquer learning strategy where multiple specialized expert models collaborate to solve complex tasks, with a gating mechanism dynamically routing inputs to the most suitable experts. The foundational concept of MoE can be traced back to \cite{jordan1994hierarchical} and has been widely adopted in vision~\cite{riquelme2021scaling}, multi-modal learning~\cite{mustafa2022multimodal}, and multi-task learning~\cite{zhu2022uni}.

In graph-related applications, MoE has demonstrated significant advantages by leveraging diverse graph properties. MoKGE~\cite{yu2022diversifying} integrates experts specializing in different subspaces and relational structures of commonsense KGs, achieving diverse outputs in generative commonsense reasoning. MoG~\cite{zhang2023graph} incorporates pruning experts with complementary sparsification strategies, where each expert executes a unique pruning method to customize pruning decisions for individual nodes. Meanwhile, GMoE~\cite{wang2023graph} deploys message-passing experts that specialize in different hop distances and aggregation patterns, enabling nodes to adaptively select suitable experts for information propagation based on their local topologies.

\section{THE PROPOSED METHOD}

\subsection{Preliminary} \label{Preliminary}

% \paragraph{GNN-based path reasoning methods.}  
As introduced, the reasoning task in KGs is to find the answer entity $e_a$
given a query $(e_q, r_q, ?)$, which we denote as $q = (e_q, r_q)$.
To solve this task,
GNN-based path reasoning methods, such as NBFNet and RED-GNN,
encode all paths up to length $L$ between $e_q$ and $e_a$ 
into a query-specific representation $\bm{h}_{e_a|q}^L \in \mathbb{R}^d$,
and use it to compute the score of candidate entity $e_a$.
The representation $\bm{h}_{e_y|q}^{\ell}$ at iteration $\ell$ is recursively computed via the following message passing function:

\begin{align}
\bm{h}_{e_y|q}^{\ell} = 
\!\!\!\!\!\!\!\!\!\!\!\bigoplus_{(e_x,r,e_y)\in\mathcal N_e(e_y)} \!\!\!\!\!\!\!\!\!\!\!
\big( \bm{h}_{e_x|q}^{\ell-1} \otimes \bm{w}_q^\ell(e_x, r, e_y) \big),
\label{eq:gnn}
\end{align}
where $\bm{h}_{e_x|q}^{\ell-1}$ encodes all paths of length up to $\ell-1$ from $e_q$ to $e_x$, and $\bm{w}_q^\ell(e_x, r, e_y)$ is an edge-specific weight conditioned on the query $q$. 
The operator $\otimes$ combines the path representation with the current edge encoding to form a new path of length $\ell$, and $\bigoplus$ aggregates multiple such paths reaching $e_y$.
We initialize all representations with $\bm{h}_{e_y|q}^{0} = \bm{0}$ for any entity $e_y \in \mathcal{V}$,
and entities $e_y$ that are further than $\ell$ steps from $e_q$ will have $\bm{h}_{e_y|q}^{\ell} = \bm{0}$.
After $L$ iterations of Eq.~\eqref{eq:gnn}, the final score of any entity $e_a \in \mathcal{V}$ is computed by
\begin{equation}
s_L(q, e_a) = (\bm{w}^L)^\top \bm{h}^L_{e_a|q},
\label{eq:score}
\end{equation}
where $\bm{w}^L \in \mathbb{R}^d$ is a learnable scoring vector.
More details of this path encoding process are provided in Appendix~\ref{Path Encoding Process}.

\subsection{Mixture of Length Experts for Adaptive Path Selection} \label{Mixture of Length Experts for Adaptive Path Selection}

Traditional KGs reasoning methods employ fixed-length path exploration strategies, which fail to capture the varying complexity of different queries
and waste computation cost. 
To address this limitation,
we introduce a mixture of length experts that
adaptively selects  paths with different lengths
and 
a layer-wise binary gating function to encourage shorter paths.

\paragraph{Mixture of length experts.} 
For a given query $(e_q,r_q,?)$, 
we presuppose the minimum and maximum path lengths $L_{\min}$ and $L$, 
respectively, and specify the number of selected path length experts as $k_1$ ($< L-L_{\min}$).
Instead of processing all the queries with paths up to length $L$ in Eq.~\eqref{eq:score},
we introduce a mixture of length experts to
score entities with a set of path lengths in intermediate $\ell\in[L_{\min},L]$.

We enable personalized selection of different path lengths.
Denote $\bm c_q$ as the contextual embedding of query $(e_q, r_q, ?)$ 
(details are given in Appendix~\ref{Design details of $c_q$})
and $\bm E_1\in\mathbb R^{(L-L_{\min})\times d}$
as the learnable expert embedding of paths with lengths from $L_{\min}$ to $L$.
Then,
we can measure the compatibility of each path length expert with
\begin{equation}
\resizebox{0.89\linewidth}{!}{$
    \bm Q(\bm c_q) = \bm E_1 \bm c_q+\epsilon\cdot\text{Softplus}(\bm W_n\bm c_q) \in \mathbb R^{L-L_{\min}},
$}
\label{eq:experts score}
\end{equation}
where $\epsilon \sim \mathcal N(0,1)$ is a Gaussian noise works with $\text{Softplus}$~\cite{dugas2001softplus} to encourage diverse expert selection
and
%\footnote{+yq+: the dimension is incorrect.}
$\bm W_n\in \mathbb R^{(L-L_{\text{min}}) \times d}$ is a trainable parameter that learns noise scores.
Consequently,
we obtain the set $\mathcal A:= \text{Top}_{k_1}(\bm Q(\bm c_q))$ as the indices of selected path lengths.
% Let $\mathcal G_q$ represent the set of weights $g_q(\ell)$ of all selected experts, 
Then the importance of layer $\ell\in \mathcal A$ can be computed with softmax function

\begin{equation}
    g_q(\ell) = \frac{\exp([\bm Q(\bm c_q)]_{\ell}/\tau)}{\sum_{\ell'\in\mathcal A}\exp([\bm Q(\bm c_q)]_{\ell'}/\tau)}.
\label{eq:gating}
\end{equation}

We  then compute the score of an answer entity $e_a$ with the gated outputs of selected experts with different path lengths
\begin{equation}
\Psi(e_a) = \sum\nolimits_{\ell \in A} g_q(\ell) \cdot s_l(q, e_a) , 
\label{eq:score-moe}
\end{equation}
where the score $s_l(q, e_a) = (\bm w^\ell)^\top \bm h_{e_a|q}^\ell$
at different $\ell$ is defined similarly with Eq.~\eqref{eq:score}.

\paragraph{Layer-wise binary gating function.}  \label{Layer-wise binary gating function}
Even though Eq.~\eqref{eq:score-moe}
can adaptively control the importance of different path length $\ell$,
a limitation still exists that the paths with length from $1$ to $L$ should be explored and encoded.
This can lead to significant computation costs at large layers.
To address this issue, we introduce a layer-wise binary gating function to encourage the model to explore shorter paths. Specifically, during training, we employ a differentiable statistics-based binary gating function $g_b({\ell}) \in (0,1)$ calculated by the Gumbel-Sigmoid~\cite{jang2017categorical}
transformation that evaluates path quality based on layer-wise feature distributions to learn a natural bias towards shorter paths while maintaining differentiability.
(details are given in Appendix~\ref{Design details of gating function}).
We use $g_b({\ell})$ to control the update of the message function in Eq.~\eqref{eq:gnn}
with $\bm h^{\ell}_{e_y|q} \leftarrow g_b({\ell}) \cdot \bm h^\ell_{e_y|q}$.
During inference, we further strengthen this preference through a deterministic truncation strategy,
where $g_b({\ell})=1$, if the paths should continue grow, otherwise $g_b({\ell})=0$.

The iterative message passing process will immediately stop if $g_b({\ell})=0$.
This length control mechanism enables the model to systematically prefer shorter paths when they provide sufficient evidence for reasoning,
improving inference efficiency.

\subsection{Mixture of Pruning Experts for Personalized Path Exploration}

Apart from selecting path length adaptively,
we propose to encourage personalized path pruning,
which incorporates both structural patterns and semantic relevance in KGs reasoning.

We build upon the node-wise pruning mechanism established in AdaProp~\cite{Adaprop}.
Denote $\mathcal V^{\ell}$ as the set of entities that are covered by the message function Eq.~\eqref{eq:gnn} at step $\ell$.
$\mathcal V^{\ell}$ contains all the ending entities of paths with lengths up to $\ell$.
When expanding from $\mathcal V^{\ell-1}$ to $\mathcal V^{\ell}$,
we select Top-$K^{\ell}$ entities from  $\mathcal V^{\ell}$
as an approach to control the number of selected paths.
To implement this fine-grained personalization of path exploration,
we propose three specialized pruning experts 
with different scoring function $\phi_{i}^{\ell}(\cdot)$
that analyzes the importance of entities $e_a\in \mathcal V^{\ell}$
from complementary perspectives.

\begin{itemize}[leftmargin=*]
    \item The Scoring Pruning Expert evaluates the overall contribution to reasoning with the layer-wise score: 
    $\phi_{\text{Sco}}^{\ell}(e_a)=s_l(q,e_a)=(\bm w^\ell)^\top\bm h^\ell_{e_a|q}$.
    % - \log(-\log U_e)$, where $U_e \sim \text{Uniform(0,1)}$ and $- \log(-\log U_e)$ is Gumbel noise to encourage pruning diversity.
    \item The Attention Pruning Expert specifically addresses the structural patterns by examining relation combinations and connectivity patterns through an attention mechanism. 
    This expert identifies entities that are important to at least one path connected.
    % This expert can effectively identify high-quality relation chains and filter out irrelevant combinations by capturing the interactions between entities, relations, and queries at different layers. 
    As defined in Eq.~\eqref{eq:gnn}, at each length $\ell -1$, for each entity $e_y \in \mathcal V^\ell$, we calculate the attention scores $ \alpha$ (detailed computation process is provided in Appendix~\ref{Path Encoding Process}) for all neighboring edges $(e_x,r,e_y) \in \mathcal N_e(e_y)$ where $e_x \in \mathcal V^{\ell -1}$, and assign the maximum attention score among all edges connected to $e_y$ as the attention score of entity $e_y$: $\phi_{\text{Att}}^{\ell}(e_a)=\text{Max}( \alpha(e_x,r,e_a)|(e_x,r,e_a)\in \mathcal N_e(e_a))$.
    \item The Semantic Pruning Expert focuses on semantic relevance by computing the semantic alignment between entities and query relations, 
    ensuring that the selected paths contain thematically coherent concepts that are meaningfully related to the query context. 
    For instance, when reasoning about movie preferences, this expert would favor paths containing entertainment-related entities and relations.  
    We use cosine similarity to measure the coherence:
    $\phi_{\text{Sem}}^{\ell}(e_a)=\cos \left(\bm{h}^\ell_{e_a|q}, \bm w_{r_q}^\ell \right)$.

\end{itemize}

To adaptively combine insights from these path evaluation experts, at each layer $\ell$, similar as the lengths experts defined in Section~\ref{Mixture of Length Experts for Adaptive Path Selection}, denote $\bm c^\ell_v$ as the contextual embedding and $\bm E_2^{\ell} \in \mathbb R^{3 \times d}$ as the learnable embedding of pruning experts at length $\ell$, we can similarly get $\bm Q^\ell(\bm c^\ell_v) \in \mathbb R^{3}$ as defined in Eq.~\eqref{eq:experts score}. Let $\mathcal V_{\phi _i}^{\ell}$ denote the set of entities retained by expert $i$ and $k_2$ denote the predefined number of retained pruning experts, 
the entities retained in the $\ell$-th layer are the union of the entities retained by each selected pruning experts as:
\begin{equation}
\resizebox{0.89\linewidth}{!}{$
    \mathcal V_{\phi}^{\ell} \!=\! \{\cup_ {i\in{\text{TopK}_{k_{2}}}(\bm Q^{\ell}(\bm c^\ell_v))} \mathcal V^{\ell}_{\phi_i}|\mathcal V_{\phi_i}^\ell \!=\! \text{TopK}_{K^\ell}(\phi_i^{\ell}(e_a))\}.
    $}
\end{equation}

To further enhance path quality, we introduce an adaptive path exploration strategy that dynamically controls the exploration breadth. Our strategy allows $K^\ell$ to increase with path exploration depth in early stages, while decreasing at larger depth (Detailed description in Appendix~\ref{Sampling number function design}). This strategy enables thorough exploration of promising path regions while preventing noise accumulation from overextended paths.

\subsection{Training Details}

\paragraph{Task Loss} 

To enable effective personalized path exploration, we formulate a task loss that jointly optimizes the GNN parameters and expert model parameters. 
The task loss is defined as:
\begin{equation}
\resizebox{0.89\linewidth}{!}{$
\mathcal L_\text{task} =\!\!\!\!\!\!\!\!\!\!\!\!\sum\limits_{\tiny(e_q,r_q,e_a) \in \mathcal Q_{tra}} \!\left[-\Psi(e_a) + \log \!\!\sum\limits_{\tiny e_o \in \mathcal V} \!\!\exp(\Psi(e_o))\right],
$}
\label{eq:taskloss}
\end{equation}
where the first part is the total score of the positive triple $(e_q,r_q,e_a)$ in the set of training queries, and the second part contains the total scores of all triple with the same query $(e_q,r_q,?)$.

\paragraph{Experts Balance Loss} \label{sec:experts_balance}
To achieve balanced and effective path exploration and address the potential ``winner-takes-all'' problem \cite{lepikhin2020gshard}, we follow \cite{wang2023graph} by introducing several regularization terms. These terms prevent the model from overly relying on specific exploration strategies or experts (Details are provided in Appendix~\ref{Loss function calculation supplement}). The importance loss is defined as:
\begin{equation}
    \begin{aligned}
        &\text{Importance}(\mathcal C) = \sum_{\bm c \in \mathcal C}\sum_{ g \in \mathcal G(\bm c)} g , \\
        &\mathcal L_{\text{Importance}}(\mathcal C) = \text{CV}(\text{Importance}(\mathcal C))^2 ,
    \end{aligned}
    \label{Eq:importance loss}
\end{equation}
where $g \in \mathcal G(\bm c)$ denotes the output of the experts' gating mechanism as calculated in Eq.~\eqref{eq:gating}, and $\text{CV}(\bm X) = \sigma(\bm X)/\mu(\bm X)$ represents the coefficient of variation of input $\bm X$. This formulation yields the length expert importance loss $\mathcal L_l$ and pruning importance loss $\mathcal L_p$. Furthermore, we introduce a load balancing loss for length experts:

\begin{equation}
    \mathcal L_{\text{load}}= \text{CV}(\sum_{\bm c_q \in \mathcal C}\sum_{ p \in P(\bm c_{q},\ell)}p)^2 ,
\end{equation}
where $p$ represents the node-wise probability in the batch.

The final training objective combines these balance terms with the main reasoning task:
\begin{equation}
    \mathcal L = \mathcal L_\text{task} + \lambda_1 (\mathcal L_l + \mathcal L_p) + \lambda_2 \mathcal L_\text{load} ,
\end{equation}
where $\lambda_1,\lambda_2$ are hand-tuned scaling factors.

\begin{algorithm}[ht]
\caption{MoKGR Algorithm Analysis}
\label{alg:MoKGR Algorithm}
\begin{algorithmic}[1]
\REQUIRE Number of length and pruning experts $k_{1}$ and $k_{2}$, range of paths length $[L_{\min}, L]$.
\ENSURE Optimized GNN model parameters $\Theta$ and expert model parameters $\mathbb W$.

\WHILE{not converged}
    \FOR {Each batch of queries $\{(e_q, r_q, e_a)\}$ from $\mathcal{Q}_{tra}$, $\ell \in [1,L]$}
        \IF{$\ell==L_{\min}$} \label{alg: length}
            \STATE Compute context representation $\bm{c}_q$;
            \STATE From $\ell \in [L_{\min}, L]$ select Top-$k_{1}$ length experts via $\bm{Q}(\bm c_q)$;
        \ENDIF
        \STATE Select Top-$k_{2}$ active pruning experts via $\bm Q^\ell(\bm c^\ell_v)$; \label{alg: select pruning}
        \STATE Union selected experts to get entities $\mathcal{V}_\phi^\ell$; \label{alg: union entities}
        \STATE Update entities in $\mathcal{V}_\phi^\ell$ via Eq.~\eqref{eq:gnn}; \label{alg:update path representation}
        \IF{$\ell$ is the selected length expert}
        \STATE Update the entity scores $\Psi(e_a)$ for $e_a \in  \mathcal{V}_\phi^\ell$; \label{alg:update scores}
        \STATE \textbf{break if}  early stopping condition meet: $g_b(\ell)=0$; \label{alg: early stopping}
        \ENDIF
    \ENDFOR
    
    \STATE Compute total loss $\mathcal{L}$ combining task and balance losses; \label{alg: compute losses}
    \STATE Update $\Theta$ and $\mathbb W$ using gradient of $\mathcal{L}$;
\ENDWHILE
\RETURN Optimized parameters $\Theta$, $\mathbb W$.
\end{algorithmic}
\end{algorithm}

The full algorithm of MoKGR is shown in Algorithm~\ref{alg:MoKGR Algorithm}. 
For each layer's message passing, we first compute the selected pruning experts and their corresponding weights $\bm Q^\ell(\bm c^\ell_v)$ in line~\ref{alg: select pruning}. Then we obtain the final set of preserved entities $\mathcal V_\phi^\ell$ by combining the selected experts in line~\ref{alg: union entities}. Subsequently, we perform message passing only on the preserved entity set in line~\ref{alg:update path representation}. When our message passing reaches layer $L_{\min}$ as shown in line~\ref{alg: length}, we first calculate the selected experts and their corresponding weights $\bm{Q}(\bm c_q)$ through the length expert gating mechanism. In the subsequent layers $\ell \in [L_{\min},L]$, if $\ell$ is a selected length expert, we compute and update the scores of entities selected by pruning experts at that layer in line~\ref{alg:update scores}. If our layer-wise binary gating function is activated, early stopping is performed in line~\ref{alg: early stopping}. After message passing ends or early stopping is triggered, we use the highest-scoring candidate answer entity $e_a$ from all candidates in $\Psi(e_a)$ as the final predicted answer entity.

\section{Experiments}

In this section, we conduct extensive experiments to answer the following research questions: 
(\textbf{RQ1}~\ref{RQ1}) How effective is MoKGR in improving reasoning performance and efficiency compared to existing methods? (\textbf{RQ2}~\ref{RQ2}) How does our expert selection mechanism perform? (\textbf{RQ3}~\ref{RQ3}) How does MoKGR achieve personalized path exploration in practice? (\textbf{RQ4}~\ref{RQ4}) How do different components and hyperparameters affect the model's performance?

\subsection{Experimental Setup} \label{Experiment Setup}

We compares MoKGR with general KGs reasoning methods in both transductive and inductive settings. (The other implementation details and inductive setting  is given in Appendix~\ref{Experimental Detai and Supplementary Results}.)
We use filtered ranking-based metrics for evaluation, namely mean reciprocal ranking (MRR) and Hit@k. Higher values for these metrics indicate better performance. 

\paragraph{Datasets.}
We use six widely used KGs reasoning benchmarks: Family \cite{kok2007statistical}, UMLS \cite{kok2007statistical}, WN18RR \cite{dettmers2017convolutional}, FB15k-237 \cite{toutanova2015observed}, NELL-995 \cite{xiong2017deeppath}, and YAGO3-10 \cite{suchanek2007yago}. 

\paragraph{Baselines.}
We compare the proposed MoKGR with (i) non-GNN methods: ConvE \cite{ConvE}, QuatE \cite{QuatE}, RotatE \cite{RotatE}, MINERVA \cite{MINERVA}, DRUM \cite{DRUM}, AnyBURL~\cite{meilicke2020reinforced}, RNNLogic \cite{RNNLogic}, RLogic \cite{RLogic}, DuASE~\cite{li2024duase} and GraphRulRL~\cite{mai2025graphrulrl}; and (ii) GNN-based methods CompGCN \cite{CompGCN}, NBFNet \cite{zhu2021neural}, RED-GNN \cite{RED-GNN}, A*Net \cite{zhu2023astarnet}, Adaprop \cite{Adaprop}, ULTRA~\cite{galkin2024foundation} and one-shot-subgraph \cite{one-shot-subgraph}.

\subsection{Overall Performance (RQ1)} \label{RQ1}

\begin{table*}[ht]
\centering
%{-3pt}
\resizebox{\textwidth}{!}{
\setlength\tabcolsep{2.5pt}
\begin{tabular}{lc|ccc|ccc|ccc|ccc|ccc|ccc}
\hline
\multirow{2}{*}{\textbf{Type}} & \multirow{2}{*}{\textbf{Model}} & \multicolumn{3}{c|}{Family} & \multicolumn{3}{c|}{UMLS} & \multicolumn{3}{c|}{WN18RR} & \multicolumn{3}{c|}{FB15k237} & \multicolumn{3}{c|}{NELL-995} & \multicolumn{3}{c}{YAGO3-10} \\
    &        & MRR & H@1 & H@10 & MRR & H@1 & H@10 & MRR & H@1 & H@10 & MRR & H@1 & H@10 & MRR & H@1 & H@10 & MRR & H@1 & H@10 \\
\hline
 \multirow{10}{*}{\textbf{Non-GNN}}  &  ConvE   & 0.912 & 83.7 & 98.2 & 0.937 & 92.2 & 96.7 & 0.427 & 39.2 & 49.8 & 0.325 & 23.7 & 50.1 & 0.511 & 44.6 & 61.9 & 0.520 & 45.0 & 66.0 \\

 &   QuatE     & 0.941 & 89.6 & 99.1 & 0.944 & 90.5 & 99.3 & 0.480 & 44.0 & 55.1 & 0.350 & 25.6 & 53.8 & 0.533 & 46.6 & 64.3 & 0.379 & 30.1 & 53.4 \\
 &    RotatE    & 0.921 & 86.6 & 98.8 & 0.925 & 86.3 & 99.3 & 0.477 & 42.8 & 57.1 & 0.337 & 24.1 & 53.3 & 0.508 & 44.8 & 60.8 & 0.495 & 40.2 & 67.0 \\
 & MINERVA & 0.885 & 82.5 & 96.1 & 0.825 & 72.8 & 96.8 & 0.448 & 41.3 & 51.3 & 0.293 & 21.7 & 45.6 & 0.513 & 41.3 & 63.7 & - & - & - \\
 &    DRUM     & 0.934 & 88.1 & 99.6 & 0.813 & 67.4 & 97.6 & 0.486 & 42.5 & 58.6 & 0.343 & 25.5 & 51.6 & 0.532 & 46.0 & 66.2 & 0.531 & 45.3 & 67.6 \\
  &    AnyBURL     & 0.861 & 87.4 & 89.2 & 0.828 & 68.9 & 95.8 & 0.471 & 44.1 & 55.2 & 0.301 & 20.9 & 47.3 & 0.398 & 27.6 & 45.4 & 0.542 & 47.7 & 67.3 \\
  &  RNNLogic    & 0.881 & 85.7 & 90.7 & 0.842 & 77.2 & 96.5 & 0.483 & 44.6 & 55.8 & 0.344 & 25.2 & 53.0 & 0.416 & 36.3 & 47.8 & 0.554 & 50.9 & 62.2 \\
  &    RLogic    & - & - & - & - & - & - & 0.477 & 44.3 & 53.7 & 0.310 & 20.3 & 50.1 & 0.416 & 25.2 & 50.4 & 0.36 & 25.2 & 50.4 \\
    &    DuASE    & 0.861 & 81.2 & 90.8 & 0.845 & 72.5 & 85.5 & 0.489 & 44.8 & 56.9 & 0.329 & 23.5 & 51.9  & 0.423 & 37.2 & 59.2 & 0.473 & 38.7 & 62.8 \\
    &    GraphRulRL    & 0.928 & 87.4 & 95.1 & 0.869 & 84.5 & 97.1 & 0.483 & 44.6 & 54.1 & 0.385 & 31.4 & 57.5 & 0.425 & 27.8 & 52.7 & 0.432 & 35.4 & 51.7 \\
  \cline{1-20}
 \multirow{7}{*}{\textbf{GNNs}} &   CompGCN    & 0.933 & 88.3 & 99.1 & 0.927 & 86.7 & 99.4 & 0.479 & 44.3 & 54.6 & 0.355 & 26.4 & 53.5 & 0.463 & 38.3 & 59.6 & 0.421 & 39.2 & 57.7 \\
  &   NBFNet  & 0.989 & \underline{98.8} & 98.9 & 0.948 & 92.0 & \underline{99.5} & 0.551 & 49.7 & 66.6 & 0.415 & 32.1 & \underline{59.9} & 0.525 & 45.1 & 63.9 & 0.550 & 47.9 & 68.6 \\
  &  RED-GNN     & \underline{0.992} & \underline{98.8} & \textbf{99.7} & 0.964 & 94.6 & 99.0 & 0.533 & 48.5 & 62.4 & 0.374 & 28.3 & 55.8 & 0.543 & 47.6 & 65.1 & 0.559 & 48.3 & 68.9 \\
  &  A*Net     & 0.987 & 98.4 & 98.7 & 0.967 & 94.8 & 99.1 & 0.549 & 49.5 & 65.9 & 0.411 & 32.1 & 58.6 & 0.549 & 48.6 & 65.2 & 0.563 & 49.8 & 68.6 \\
  &   AdaProp    & 0.988 & 98.6 & 99.0 & 0.969 & \underline{95.6} & \underline{99.5} & 0.562 & 49.9 & \underline{67.1} & 0.\underline{417} & 33.1 & 58.5 & \underline{0.554} & \underline{49.3} & 65.5 & 0.573 & 51.0 & 68.5 \\
   &    ULTRA   & 0.913 & 86.6 & 97.2 & 0.915 & 89.6 & 98.4 & 0.480 & 47.9 & 61.4 & 0.368 & \underline{33.9} & 56.4 & 0.509 & 46.2 & \underline{66.0} & 0.557 & 53.1 & 71.0 \\
  & one-shot-subgraph & 0.988 & 98.7 & 99.0 & \underline{0.972} & 95.5 & 99.4 & \underline{0.567} & \underline{51.4} & 66.6 & 0.304 & 22.3 & 45.4 & 0.547 & 48.5 & 65.1 & \underline{0.606} & \underline{54.0} & \underline{72.1} \\
  \cline{2-20}
  &  \textbf{MoKGR}   & \textbf{0.993} & \textbf{99.1} & \underline{99.3} & \textbf{0.978} & \textbf{96.5} & \textbf{99.6 }& \textbf{0.611} & \textbf{53.9} & \textbf{70.2} & \textbf{0.443} & \textbf{36.8} & \textbf{60.7} &\textbf{0.584} & \textbf{50.7} & \textbf{67.9} & \textbf{0.657} &\textbf{57.7}&\textbf{75.8} \\
\hline
\end{tabular}
%{-18pt}
}
\caption{Comparison of MoKGR with other KG reasoning methods in the transductive setting. Best performance is indicated by the \textbf{bold} face numbers, and the \underline{underline} means the second best. ``-" means unavailable results. ``H@1" and ``H@10" are short for Hit@1 and Hit@10 (in percentage), respectively.}
%{-9pt}
\label{tab:transductive}
\end{table*}

%The results for the baseline methods were either directly obtained from the original publications or reproduced using  the official source code provided by the authors. However, some results for the RLogic approach could not be included due to the unavailability of the source code.

\begin{figure}[ht]
  \centering
    \centering
    \includegraphics[width=0.49\textwidth]{./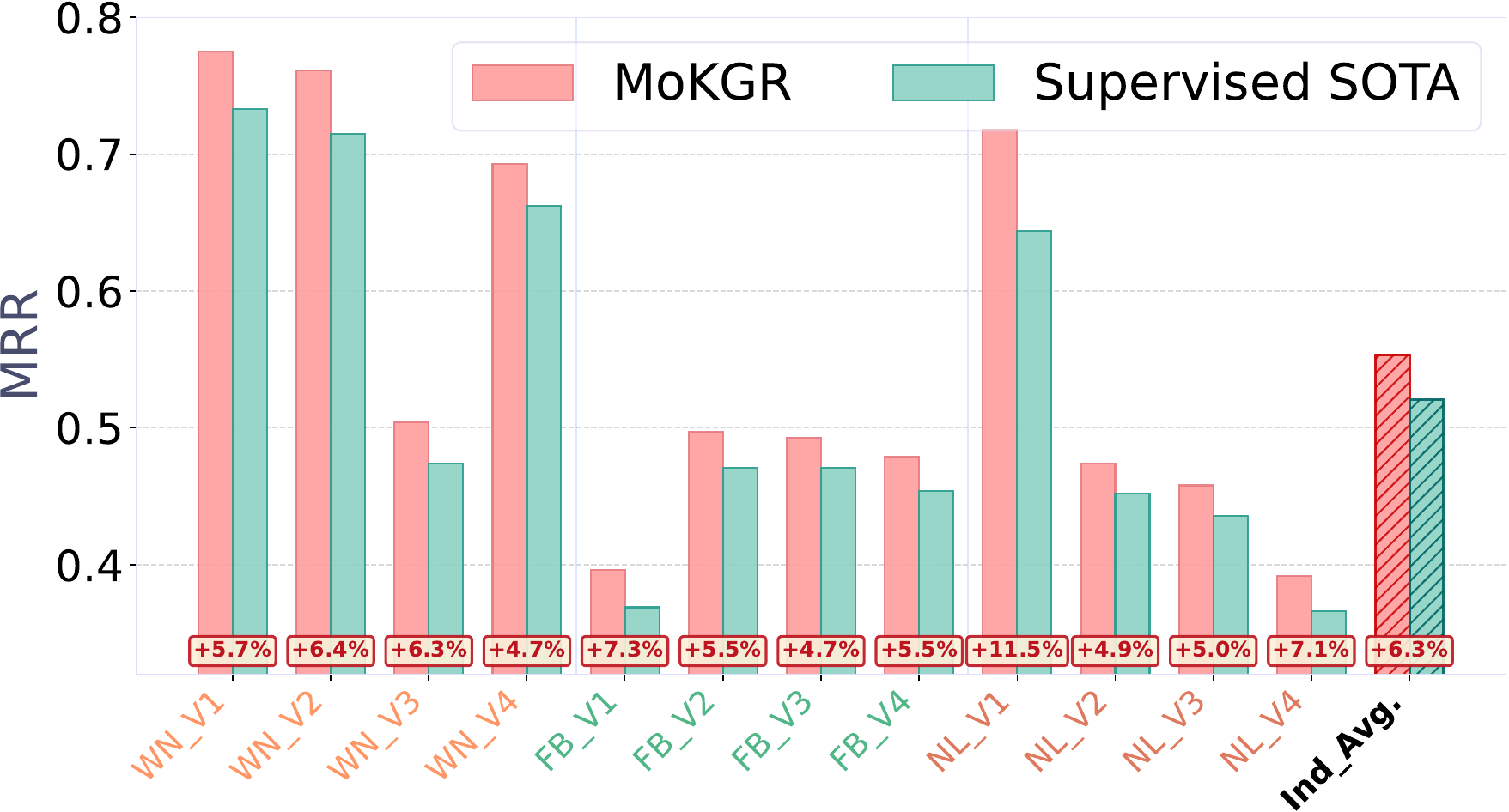}
    \label{fig:inductive}
  \caption{Comparison of MoKGR with supervised state-of-the-art baselines under the inductive setting.}
  \label{fig:main_compare}
  %{-9pt}
\end{figure}
\paragraph{Results.}
Tab.~\ref{tab:transductive} and Fig.~\ref{fig:main_compare} show that our proposed MoKGR exhibits exceptional performance across all benchmark datasets in both transductive and inductive reasoning (Detail implementation and results for inductive setting are given in Appendix~\ref{implementation details for inductive setting}). The experimental results validate several key advantages of our approach. First, the adoption of GNN-based message passing proves to be more effective than non-GNN methods for KGs reasoning, as evidenced by consistent performance improvements across all metrics. Furthermore, compared to 
full-exploration \cite{RED-GNN,zhu2021neural} and fixed pruning strategies \cite{zhu2023astarnet,Adaprop,one-shot-subgraph}, 
our 
MoE system achieves personalization in both pruning and reasoning path length, while implementing a mixture and personalized approach that significantly enhances reasoning accuracy. Notably, our method performs particularly well on the largest dataset YAGO3-10, solving the out-of-memory problem of full-exploration methods, and greatly improving the accuracy compared with other pruning methods.

\begin{figure}[t]
  \centering
  %{-4pt}
  \begin{subfigure}[c]{0.26\textwidth}
    \centering
    \includegraphics[height=2.6cm, width=\textwidth]{./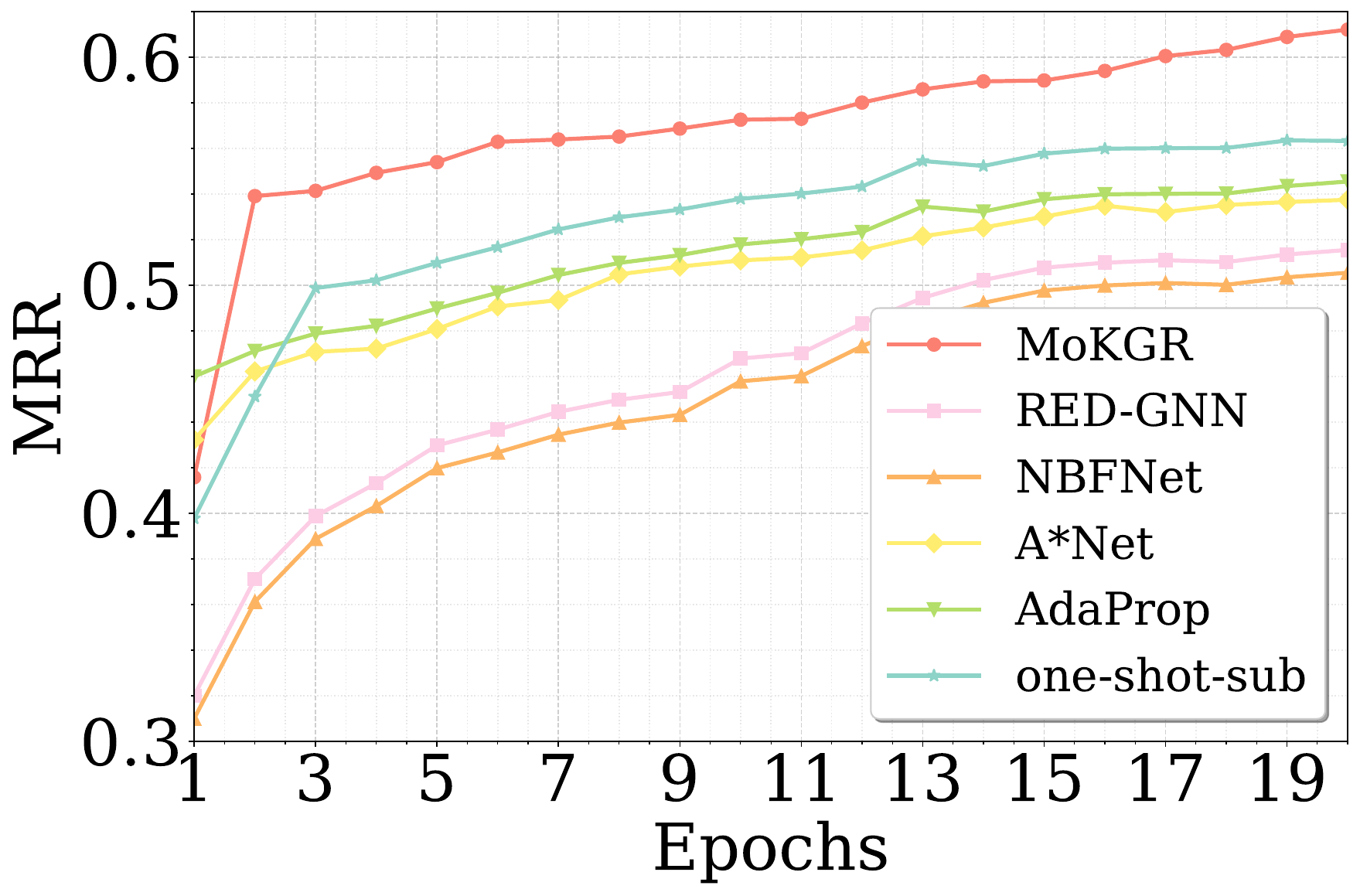}
    \caption{MRR over epochs}
    \label{fig:Comparison of MRR} 
  \end{subfigure}%
  \hfill
  \begin{subfigure}[c]{0.22\textwidth}
    \centering
    \begin{minipage}[c][3cm][c]{\linewidth}
      \centering
      %*{1.5mm}
      \resizebox{\linewidth}{!}{ 
      \begin{tabular}{c|cc}
          \toprule
          \textbf{Min/Epoch} &\textbf{Training}  &\textbf{Inference}\\
          \midrule
          \textbf{MoKGR} & 111.73 & \textbf{58.3}\\
          \midrule
          RED-GNN &  1382.9  & 802.2\\  
          NBFNet &  493.8& 291.4\\
          A*Net &  112.3  & 92.4 \\
          Adaprop &  \textbf{108.6} & 84.2  \\
          one-shot-subgraph & 147.9  & 71.9 \\
          \bottomrule
      \end{tabular}
      }
    \end{minipage}
    \caption{Average time per epoch}
    \label{Tab:Comparison of Effeciency}
  \end{subfigure}%
  %{-8pt}
  \caption{Comparison between MoKGR and current state-of-the-art methods in YAGO3-10 dataset.}
  \label{fig:result_comparation}
  %{-4pt}
\end{figure}

\paragraph{Learning Process Comparison.}
To comprehensively evaluate the effectiveness of MoKGR, we analyze a few SOTA methods on the YAGO3-10 dataset. 
We tracked both the MRR performance and computational time (training/inference) over 20 epochs. As shown in Fig.~\ref{fig:result_comparation}, MoKGR exhibits two distinctive advantages: 1) rapid convergence during early training phases, particularly evident in the steep curves within the first five epochs, and 2) stable performance growth throughout the training process. In contrast, other methods show slower convergence and more erratic progression, particularly in later epochs. 
The computational efficiency analysis presented in Table~\ref{Tab:Comparison of Effeciency} demonstrates that 
%\footnote{+yq+: be careful on all the words.
%this table does not have this information.}
%MoKGR achieves superior performance while maintaining excellent %computational efficiency. 
%\footnote{+yq+: this point seems duplicate with the first point.}
% Enjun: I think this isn't actually a duplication. The first point discusses MoKGR's faster initial learning and stability, while the computational efficiency point discusses actual runtime performance (training and inference times).
%Specifically, 
MoKGR achieves significantly faster inference times compared to other approaches, while its training time is substantially lower than full-exploration methods and comparable to other pruning approaches.

\subsection{Expert Selection Analysis (RQ2)} \label{RQ2}

\begin{figure}[h]
  \centering
  %{-6pt}
  \begin{subfigure}[b]{0.23\textwidth}
    \centering
    \includegraphics[width=\textwidth]{./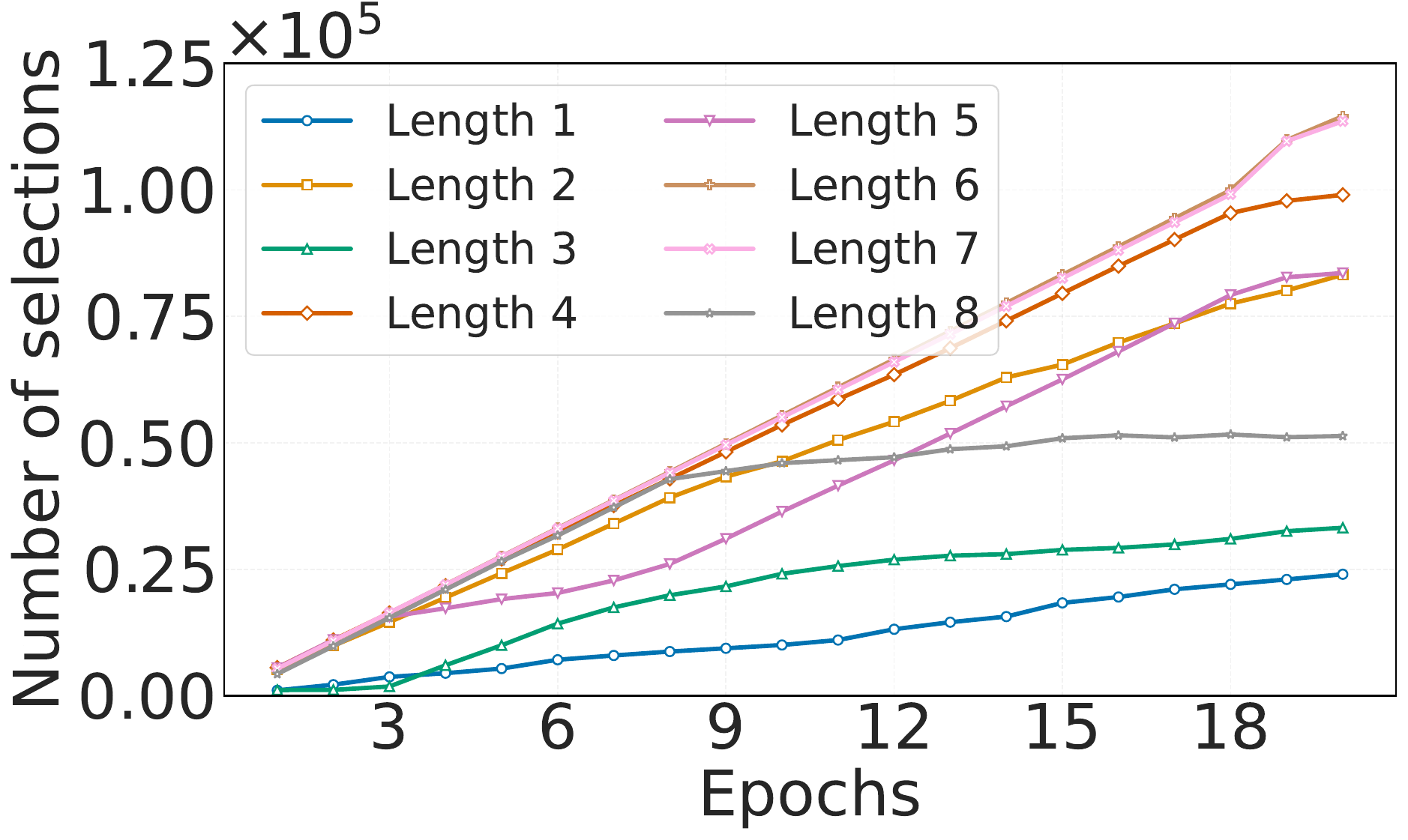}
    \caption{Length experts selection}
    \label{fig:expert selectiona}
  \end{subfigure}%
  \hfill
  \begin{subfigure}[b]{0.23\textwidth}
    \centering
    \includegraphics[width=\textwidth]{./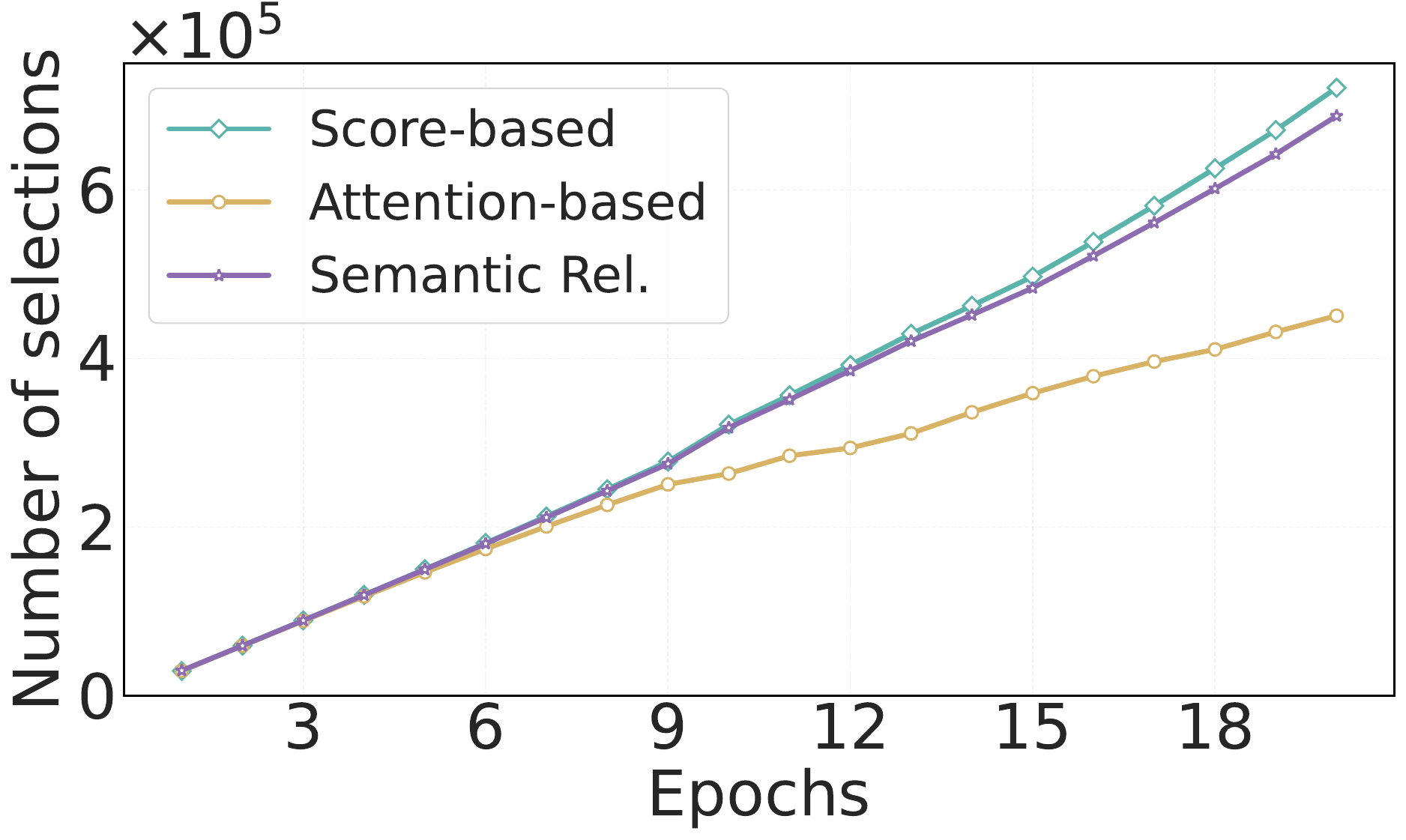}
    \caption{Pruning experts selection}
    \label{fig:expert selectionb} 
  \end{subfigure}%
  %{-9pt}
  \caption{Selection Curves of Two Experts.}
  \label{fig:expert selection}
  %{-8pt}
\end{figure}

\paragraph{Selection Patterns of Length and Pruning Experts.}
To validate the necessity and effectiveness of our multi-expert approach, we conducted comprehensive experiments analyzing the selection patterns of both length experts and pruning experts on the YAGO3-10 dataset. The length expert selection patterns in Fig.~\ref{fig:expert selectiona} reveal 
that the model demonstrates a preference for mixing medium-length paths.
For paths of length 8, we observe a turning point at Epoch 8, which we attribute to the effect of the gating function described in Section~\ref{Layer-wise binary gating function}. This function suppresses the utilization of longer paths, causing the model to favor shorter path lengths. The expert selection pattern in pruning, illustrated in Fig.~\ref{fig:expert selectionb}, demonstrates the complementarity of expert selections. Despite variations in their selection frequencies, the overall trend maintains a steady upward progression.

\begin{figure}[h]
  \centering
  %{-9pt}
  \begin{subfigure}[b]{0.24\textwidth}
    \centering
    \includegraphics[width=\textwidth]{./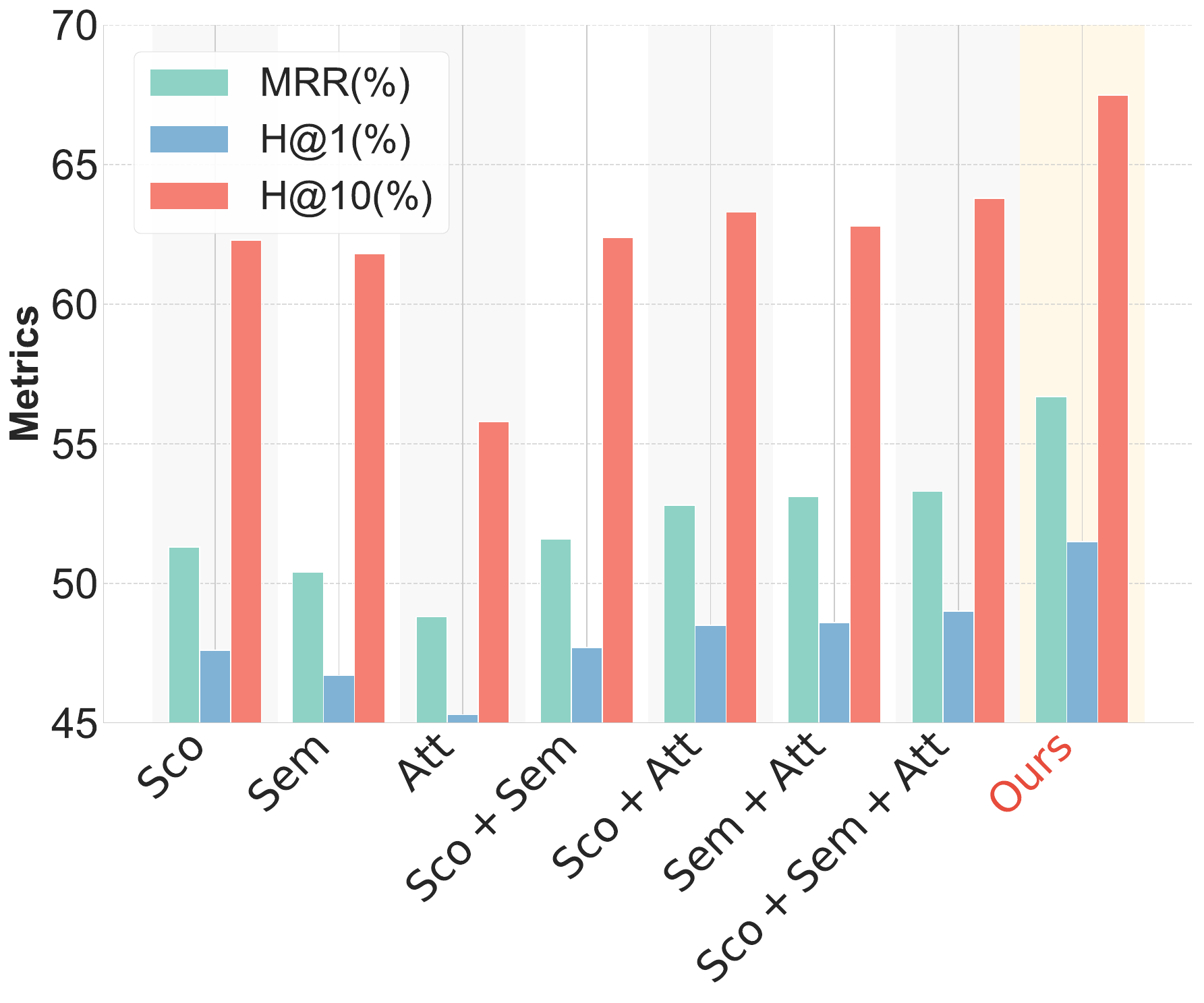}
    \caption{WN18RR}
    \label{Pruning Strategy Selection on WN18RR} % 子图a标签
  \end{subfigure}%
  \begin{subfigure}[b]{0.24\textwidth}
    \centering
    \includegraphics[width=\textwidth]{./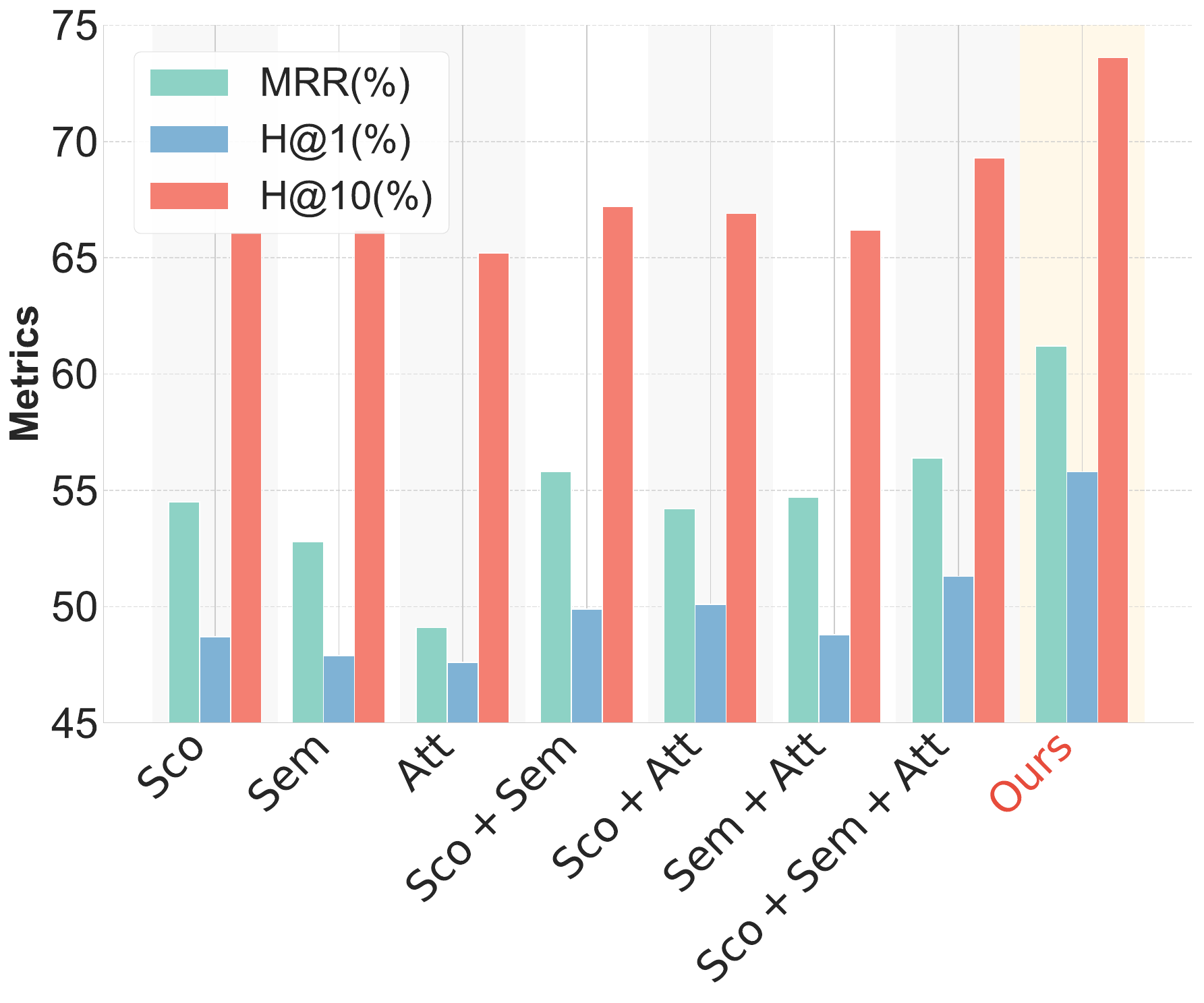}
    \caption{YAGO3-10}
    \label{Pruning Strategy Selection on YAGO} % 子图b标签
  \end{subfigure}%
  %{-8pt}
  \caption{Comparison of Pruning Selection Strategy.}
  \label{fig:Comparison of Pruning Strategy}
  %{-8pt}
\end{figure}

\paragraph{Effectiveness of Adaptive Expert Weighting.}
We evaluate the effectiveness of adaptive weighting by comparing three approaches on WN18RR and YAGO3-10 datasets: single experts (weight 1.0), fixed-weight expert combinations (equal weights), and our adaptive weighting mechanism. Our experiments compare all possible expert combinations ($C^1_3 + C^2_3 + C^3_3$). The results shown in Fig.~\ref{fig:Comparison of Pruning Strategy} demonstrate that compared to single experts, mixed experts perform better, reflecting the complementary nature of different pruning experts. Moreover, compared to fixed expert weights, our dynamic expert weighting shows superior performance, highlighting the necessity of weight personalization for pruning experts. These results validate that adaptive expert weighting is essential for optimal KGs reasoning, as it enables dynamic adjustment of pruning strategies based on query-specific requirements.

\subsection{Case Study (RQ3)} \label{RQ3}

\begin{figure}[h]
  \centering
  %{-5pt}
  \begin{subfigure}[b]{0.23\textwidth}
    \centering
    \includegraphics[width=\textwidth]{./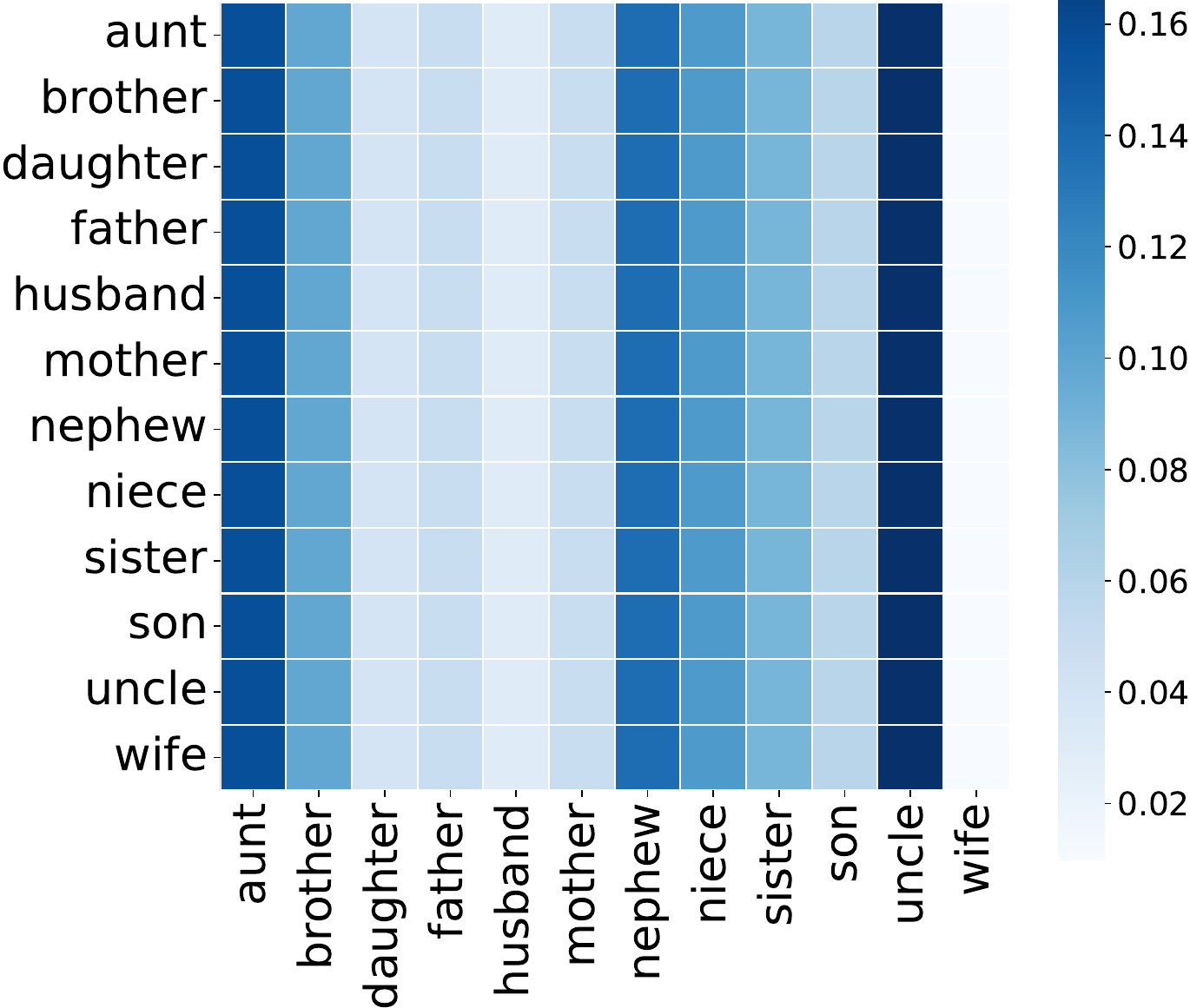}
    \caption{RED-GNN}
    \label{fig:case_a} 
  \end{subfigure}%
  \begin{subfigure}[b]{0.23\textwidth}
    \centering
    \includegraphics[width=\textwidth]{./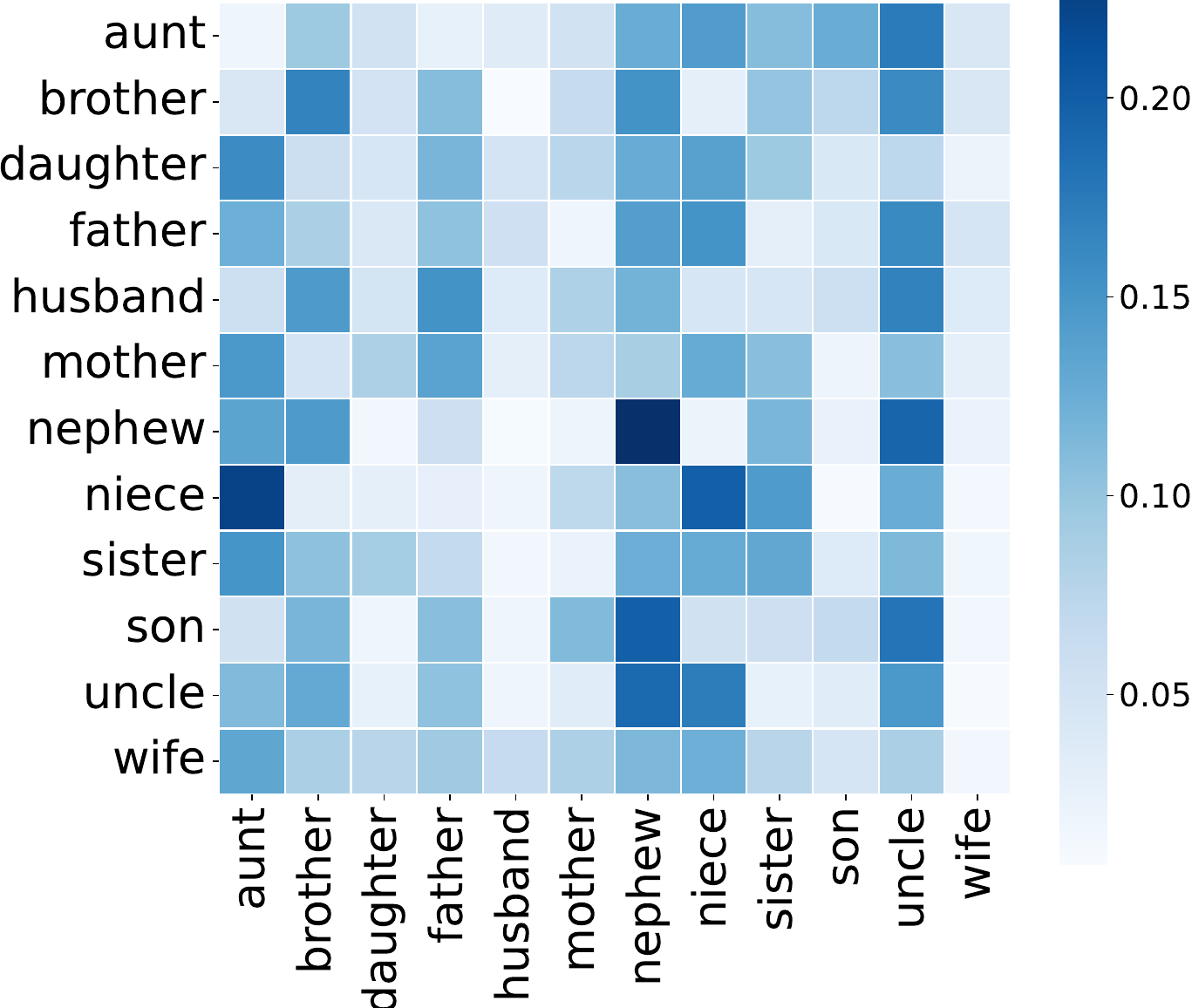}
    \caption{MoKGR}
    \label{fig:case_b}
  \end{subfigure}%
  %{-8pt}
  \caption{Heatmaps of relation type ratios in the reasoning path on Family dataset. Rows represent different query relations, and columns correspond to relation types in the selected edges.}
  \label{fig:case_study}
  %{-9pt}
\end{figure}

To validate MoKGR’s personalized path exploration capabilities, we analyze the reasoning paths selected by the full-propagation method RED-GNN (which passes messages to all neighboring nodes at each propagation step) and compare them with those selected by MoKGR. 
In detail, for each relation $r_i$, we track its occurrence count $N_{{r}_i}^{\ell}$ at each path length $\ell$, and calculate its aggregated importance $N_{{r}i}$ by combining these counts with length experts' weights: $N_{{r}_i} = \sum_{\ell=L_{\min}}^{L} N_{{r}_i}^{\ell} \times g_q({\ell})$. 
The resulting normalized heatmaps reveal MoKGR's query-adaptive behavior. While RED-GNN show uniform distribution (Fig.~\ref{fig:case_a}), MoKGR (Fig.~\ref{fig:case_b}) identifies semantically relevant relations, such as emphasizing \textit{aunt} for \textit{niece} queries and \textit{brother}, \textit{nephew}, and \textit{uncle} for \textit{brother} queries. These case studies confirm that MoKGR effectively adapts its path selection to query semantics, highlighting relevant relations while avoiding redundant exploration, which behavior contrasts sharply with full-propagation methods. Further experiments for this analysis are provided in Appendix~\ref{Supplementary Case Study}.

\subsection{Ablation Study (RQ4)} \label{RQ4}

\begin{table}[htbp]
\centering
%{-6pt}
\resizebox{0.48\textwidth}{!}{
\begin{tabular}{cc|ccc|ccc}
    \hline
    \multicolumn{2}{c|}{\textbf{Dataset}} & \multicolumn{3}{c|}{\textbf{WN18RR}} & \multicolumn{3}{c}{\textbf{YAGO3-10}}  \\ 
    \hline
    \multicolumn{2}{c|}{\textbf{Metrics}} & \textbf{MRR} & \textbf{H@1} & \textbf{H@10} & \textbf{MRR} & \textbf{H@1} & \textbf{H@10} \\ 
    \hline
    \multicolumn{2}{c|}{\textbf{Ours}}& \textbf{0.611} & \textbf{53.9} & \textbf{70.2} & \textbf{0.657} & \textbf{57.7} & \textbf{75.8}  \\
    \hline
    \multirow{2}{*}{Balancing term}& $\lambda_1=0$ & 0.558 & 50.1 & 66.9 & 0.582 & 53.6 & 71.7 \\
     & $\lambda_2=0$ & 0.560 & 50.6 & 67.0 & 0.596 & 54.1 & 72.4 \\
    \hline
     \multirow{2}{*}{Noise term} &$\epsilon=0$ & 0.560 & 50.8 & 67.1 & 0.586 & 53.7 & 68.3 \\
    &$\epsilon = 0.2$ & 0.561 & 50.8 & 67.2 & 0.602 & 55.1 & 70.3 \\
    \hline
    \multirow{4}{*}{Expert} &$\phi_{\text{Sco}^\diamond}$ & 0.556 & 50.3 & 66.7 & 0.555 & 49.7 & 67.9 \\
    &$\phi_{\text{Att}^\diamond}$  & 0.549 & 49.9 & 66.4 & 0.501 & 48.6 & 66.2 \\
    &$\phi_{\text{Sem}^\diamond}$ & 0.553 & 50.2 & 66.5 & 0.538 & 48.9 & 67.2 \\
    &$\phi _{L^*}$ & 0.552 & 50.5 & 66.5 & 0.569 & 51.0 & 68.4 \\
    \hline
\end{tabular}
%{-5pt}
}
\caption{Ablation study on WN18RR and YAGO3-10 datasets. $\phi_{(.)^\diamond}$ mean only use this pruning expert (remain original length experts) and $\phi _{L^*}$ means only use L-th layer length expert (remain original pruning experts).}
%{-8pt}
\label{table:ablation}
\end{table}

To evaluate the contribution of each component in the MoKGR framework, we conducted a comprehensive ablation study on the WN18RR and YAGO3-10 datasets, as shown in Table~\ref{table:ablation}. The results reveal several key findings: 
First, removing any balancing term ($\lambda_1$ or $\lambda_2$) leads to decreased performance, highlighting the importance of expert coordination. 
Furthermore, for the noise parameter $\epsilon$ given in Eq.~\eqref{eq:experts score}, dynamic sampling ($\epsilon \sim \mathcal N(0,1)$) consistently outperforms both no noise ($\epsilon=0$) and fixed noise ($\epsilon=0.2$) settings, demonstrating the benefits of incorporating controlled randomness in expert selection.
Finally, whether in pruning strategies or path length strategies, restricting the model to one fixed strategy significantly weakens its reasoning capability.
    
\section{Conclusion}

In this paper, we introduced \textbf{MoKGR}, a novel framework that 
advances KGs reasoning through personalized path exploration. Our approach addresses two critical challenges in KGs reasoning: adaptive path length selection and comprehensive path evaluation. The adaptive length selection mechanism dynamically adjusts path exploration depths based on query complexity, while the mixture-of-pruning experts framework incorporates structural patterns, semantic relevance, and global importance to evaluate path quality. Through extensive experiments across diverse benchmark datasets, MoKGR demonstrates significant improvements in both reasoning accuracy and computational efficiency. The success of our personalized approach opens promising directions for future research, particularly in developing more sophisticated expert collaboration mechanisms and extending the framework to other graph learning tasks where path-based reasoning plays a crucial role. The framework's ability to balance thorough path exploration with computational efficiency makes it particularly valuable for large-scale knowledge graph applications.

\section{Limitations} % limitation and ethical consideration does not count towards page limit.
While MoKGR demonstrates significant improvements in knowledge graph reasoning performance, several limitations should be acknowledged:
First, the computational complexity of our method, though significantly reduced compared to full-exploration approaches, still grows with the scale of knowledge graphs. For extremely large-scale knowledge graphs beyond those tested in our experiments, additional optimization techniques may be required.
Moreover, while MoKGR shows strong empirical performance across diverse datasets, developing theoretical guarantees for the optimality of the selected paths remains challenging due to the complex interplay between different expert components.
Finally, our evaluation focused primarily on standard knowledge graph benchmarks. Future work could explore the application of personalized path exploration in more specialized domains such as biomedical knowledge graphs or temporal knowledge graphs, which may exhibit different structural properties.
These limitations present promising directions for future research to further enhance personalized path exploration in knowledge graph reasoning.

\section{Ethical Considerations} % limitation and ethical consideration does not count towards page limit.

 The knowledge graphs employed in this work are widely adopted public benchmarks that enable comprehensive and reproducible evaluation of knowledge‑graph reasoning systems. All datasets are redistributed strictly under their original research licences, with creators duly cited in Section~\ref{Experiment Setup}; only preprocessing scripts are released, while raw triples remain at their official sources, and our implementation plus trained checkpoints are provided under the MIT licence. While our approach improves the accuracy and efficiency of information retrieval from these structured knowledge sources, the core techniques themselves do not introduce ethical issues beyond those already understood to be associated with large-scale knowledge graphs. MoKGR's computational efficiency helps reduce energy consumption during inference, aligning with sustainable AI development goals. Throughout this work, we have utilized publicly available benchmark datasets in accordance with their intended research purposes.

\clearpage
\bibliography{custom}

\clearpage

\appendix

\section{Experimental Detai and Supplementary Results} \label{Experimental Detai and Supplementary Results}

\subsection{Training Details}

All experiments were conducted on an NVIDIA A6000 GPU (48GB), with peak memory usage remaining under 45GB even for the largest datasets. Smaller datasets such as Family and UMLS can also be efficiently run on consumer GPUs like the 3060Ti (8GB). As for the hyper-parameters, we tune the 
$L_{\min}$ from 1 to $L-2$, the temperature value $\tau$ in (0.5, 2.5), the number of length experts 
$k_{1}$ in $(3, L-L_{\min})$ and set $k_{2}=2$, $\lambda_1$ in $(10^{-2}, 10^{-4})$ and $\lambda_2$ in $(10^{-3}, 10^{-5})$. The other hyperparameters are kept the same as Adaprop~\cite{Adaprop}.

\subsection{Length Distributions} \label{Length Distributions}

\begin{table}[H]
\resizebox{\linewidth}{!}{
\begin{tabular}{c|cccccc}
\hline
distance & 1 & 2 &  3 & 4 & 5 & $>$5 \\
\hline
WN18RR & 34.9 & 9.3  & 21.5 & 7.5 & 8.9 & 17.9 \\
FB15k237 & 0.0 & 73.4 & 25.8 & 0.2 & 0.1 & 0.5 \\
NELL-995 & 40.9 & 17.2  & 36.5 & 2.5 & 1.3 & 1.6 \\
YAGO3-10 & 56.0 & 12.9  & 30.1 & 0.5 & 0.1 & 0.4 \\
\hline
\end{tabular}
}
\caption{Length distribution (\%) of queries in $\mathcal Q_{tst}$}
\label{Table:Length Distributions}
\end{table}

To validate our claim that shorter paths typically contain enough stable correct answer information, we tracked the four largest datasets in our selection and analyzed the length distribution of shortest paths connecting $e_q$ and $e_a$ in each query. The results, converted to percentages, are shown in Table~\ref{Table:Length Distributions}. We observe that for the vast majority of datasets, paths of length $3$ already contain most of the information, while paths of length $5$ encompass almost all information. This substantiates our argument that answer entities are typically in close proximity to query entities, making the introduction of excessive path lengths usually unnecessary in knowledge graph reasoning. Therefore, it is important to encourage the model to explore shorter paths preferentially.

\subsection{Statistics of Datasets} \label{Statistics of Transductive Datasets}

\begin{table}[H]
\resizebox{\linewidth}{!}{
\begin{tabular}{c|cc|cccc}
\hline
\textbf{Dataset} & \#Entity & \#relation & $|\mathcal E|$ & $|Q_{\text{tra}}|$ & $|Q_{\text{val}}|$ & $|Q_{\text{tst}}|$ \\
\hline
Family & 3.0k & 12  & 23.4k & 5.9k & 2.0k & 2.8k \\
UMLS & 135 & 46  & 5.3k & 1.3k & 569 & 633 \\
WN18RR & 40.9k & 11  & 65.1k & 21.7k & 3.0k & 3.1k \\
FB15k237 & 14.5k & 237 & 204.1k & 68.0k & 17.5k & 20.4k \\
NELL-995 & 74.5k & 200  & 112.2k & 37.4k & 543 & 2.8k \\
YAGO3-10 & 123.1k & 37  & 809.2k & 269.7k & 5.0k & 5.0k \\
\hline
\end{tabular}
}
\caption{Statistics of the transductive KGs datasets. $Q_{\text{tra}}$, $Q_{\text{val}}$, $Q_{\text{tst}}$ are the query triplets used for reasoning.}
\label{Table:datasets statistics}
\end{table}

\begin{table}[H]
\resizebox{\linewidth}{!}{
\begin{tabular}{c|c|ccc}
\hline
\textbf{Version} & \textbf{Split} & \textbf{\#relations} & \textbf{\#nodes} & \textbf{\#links} \\
\hline
WN18RR\_V1 & train & 9  & 2746  & 6678 \\
           & test  & 9  & 922   & 1991 \\
\hline
WN18RR\_V2 & train & 10 & 6954  & 18968 \\
           & test  & 10 & 2923  & 4863 \\
\hline
WN18RR\_V3 & train & 11 & 12078 & 32150 \\
           & test  & 11 & 5084  & 7470 \\
\hline
WN18RR\_V4 & train & 9  & 3861  & 9842 \\
           & test  & 9  & 7208  & 15157 \\
\hline
FB15k-237\_V1 & train & 183 & 2000 & 5226 \\
             & test  & 146 & 1500 & 2404 \\
\hline
FB15k-237\_V2 & train & 203 & 3000 & 12085 \\
             & test  & 176 & 2000 & 5092 \\
\hline
FB15k-237\_V3 & train & 218 & 4000 & 22394 \\
             & test  & 187 & 3000 & 9137 \\
\hline
FB15k-237\_V4 & train & 222 & 5000 & 33916 \\
             & test  & 204 & 3500 & 14554 \\
\hline
NELL-995\_V1 & train & 14 & 10915 & 5540 \\
            & test  & 14 & 225   & 1034 \\
\hline
NELL-995\_V2 & train & 88 & 2564  & 10109 \\
            & test  & 79 & 4937  & 5521 \\
\hline
NELL-995\_V3 & train & 142 & 4647 & 20117 \\
            & test  & 122 & 4921 & 9668 \\
\hline
NELL-995\_V4 & train & 77 & 2092 & 9289 \\
            & test  & 61 & 3294 & 8520 \\
\hline
\end{tabular}
}
\caption{Statistics of the Inductive KGs datasets.}
\label{Table:dataset_versions}
\end{table}

We evaluate our model on six widely-used knowledge graph datasets of varying scales, their specific data parameters are shown in Table~\ref{Table:datasets statistics} and Table~\ref{Table:dataset_versions}. Here $\mathcal E$ represents the edge set of the KG. Following the previous GNN-based knowledge graph reasoning method, we add an inverse relationship to each triple. Specifically, if $(e_x,r,e_y) \in \mathcal E$, we add an inverse relationship so that $(e_y,r,e_x) \in \mathcal E$.

\begin{table*}[ht]
\centering
\resizebox{\textwidth}{!}{
\begin{tabular}{ll|cccc|cccc|cccc}
\toprule
& & \multicolumn{4}{c|}{WN18RR} & \multicolumn{4}{c|}{FB15k-237} & \multicolumn{4}{c}{NELL-995} \\
& models & V1 & V2 & V3 & V4 & V1 & V2 & V3 & V4 & V1 & V2 & V3 & V4 \\
\midrule
\multirow{9}{*}{MRR} & RuleN & 0.668 & 0.645 & 0.368 & 0.624 & 0.363 & 0.433 & 0.439 & 0.429 & 0.615 & 0.385 & 0.381 & 0.333 \\
& Neural LP & 0.649 & 0.635 & 0.361 & 0.628 & 0.325 & 0.389 & 0.400 & 0.396 & 0.610 & 0.361 & 0.367 & 0.261 \\
& DRUM & 0.666 & 0.646 & 0.380 & 0.627 & 0.333 & 0.395 & 0.402 & 0.410 & 0.628 & 0.365 & 0.375 & 0.273 \\

& GraIL & 0.627 & 0.625 & 0.323 & 0.553 & 0.279 & 0.276 & 0.251 & 0.227 & 0.481 & 0.297 & 0.322 & 0.262 \\
& CoMPILE & 0.577 & 0.578 & 0.308 & 0.548 & 0.287 & 0.276 & 0.262 & 0.213 & 0.330 & 0.248 & 0.319 & 0.229 \\
& NBFNet & 0.684 & 0.652 & 0.425 & 0.604 & 0.307 & 0.369 & 0.331 & 0.305 & 0.584 & 0.410 & 0.425 & 0.287 \\
& RED-GNN & 0.701 & 0.690 & 0.427 & 0.651 & \underline{0.369} & 0.469 & 0.445 & 0.442 & 0.637 & 0.419 & \underline{0.436} & 0.363 \\
& AdaProp & \underline{0.733} & \underline{0.715} & \underline{0.474} & \underline{0.662} & 0.310 & \underline{0.471} & \underline{0.471} & \underline{0.454} & \underline{0.644} & \underline{0.452} & 0.435 & \underline{0.366} \\

& \textbf{MoKGR} & \textbf{0.775} & \textbf{0.761} & \textbf{0.504} & \textbf{0.693} & \textbf{0.396} & \textbf{0.497} & \textbf{0.493} & \textbf{0.479} & \textbf{0.718} & \textbf{0.474} & \textbf{0.458} & \textbf{0.392} \\
\midrule
\multirow{9}{*}{Hit@1 (\%)} & RuleN & 63.5 & 61.1 & 34.7 & 59.2 & \textbf{30.9} & 34.7 & 34.5 & 33.8 & \underline{54.5} & 30.4 & 30.3 & 24.8 \\
& Neural LP & 59.2 & 57.5 & 30.4 & 58.3 & 24.3 & 28.6 & 30.9 & 28.9 & 50.0 & 24.9 & 26.7 & 13.7 \\
& DRUM & 61.3 & 59.5 & 33.0 & 58.6 & 24.7 & 28.4 & 30.8 & 30.9 & 50.0 & 27.1 & 26.2 & 16.3 \\
& GraIL & 55.4 & 54.2 & 27.8 & 44.3 & 20.5 & 20.2 & 16.5 & 14.3 & 42.5 & 19.9 & 22.4 & 15.3 \\
& CoMPILE & 47.3 & 48.5 & 25.8 & 47.3 & 20.8 & 17.8 & 16.6 & 13.4 & 10.5 & 15.6 & 22.6 & 15.9 \\
& NBFNet & 59.2 & 57.5 & 30.4 & 57.4 & 19.0 & 22.9 & 20.6 & 18.5 & 50.0 & 27.1 & 26.2 & 23.3 \\
& RED-GNN & 65.3 & 63.3 & 36.8 & 60.6 & \underline{30.2} & \textbf{38.1} & 35.1 & 34.0 & 52.5 & 31.9 & \underline{34.5} & \underline{25.9} \\
& AdaProp & \underline{66.8} & \underline{64.2} & \underline{39.6} & \underline{61.1} & 19.1 & \underline{37.2} & \underline{37.7} & \underline{35.3} & 52.2 & \underline{34.4} & 33.7 & 24.7 \\
&  \textbf{MoKGR} & \textbf{66.9} & \textbf{67.8} & \textbf{40.0} & \textbf{62.3} & 23.1 & \textbf{38.1} & \textbf{38.9} & \textbf{36.4} & \textbf{64.4} & \textbf{35.8} & \textbf{35.2} & \textbf{27.3} \\

\midrule
\multirow{9}{*}{Hit@10 (\%)} & RuleN & 73.0 & 69.4 & 40.7 & 68.1 & 44.6 & 59.9 & 60.0 & 60.5 & 76.0 & 51.4 & 53.1 & 48.4 \\
& Neural LP & 77.2 & 74.9 & 47.6 & 70.6 & 46.8 & 58.6 & 57.1 & 59.3 & 87.1 & 56.4 & 57.6 & 53.9 \\
& DRUM & 77.7 & 74.7 & 47.7 & 70.2 & 47.4 & 59.5 & 57.1 & 59.3 & 87.3 & 54.0 & 57.7 & 53.1 \\
& GraIL & 76.0 & 77.6 & 40.9 & 68.7 & 42.9 & 42.4 & 42.4 & 38.9 & 56.5 & 49.6 & 51.8 & 50.6 \\
& CoMPILE & 74.7 & 74.3 & 40.6 & 67.0 & 43.9 & 45.7 & 44.9 & 35.8 & 57.5 & 44.6 & 51.5 & 42.1 \\
& NBFNet & 82.7 & 79.9 & 56.3 & 70.2 & 51.7 & 63.9 & 58.8 & 55.9 & 79.5 & 63.5 & 60.6 & 59.1 \\
& RED-GNN & 79.9 & 78.0 & 52.4 & 72.1 & 48.3 & 62.9 & 60.3 & 62.1 & 86.6 & 60.1 & 59.4 & 55.6 \\
& AdaProp & \underline{86.6} & \underline{83.6} & \underline{62.6} & \underline{75.5} & \underline{55.1} & \underline{65.9} & \underline{63.7} & \underline{63.8} & \underline{88.6} & \underline{65.2} & \underline{61.8} & \underline{60.7} \\    
&  \textbf{MoKGR} & \textbf{87.1} & \textbf{94.1} & \textbf{63.5} & \textbf{76.6} & \textbf{56.0} & \textbf{66.6} & \textbf{64.2} & \textbf{64.3} & \textbf{89.2} & \textbf{66.1} & \textbf{64.5} & \textbf{62.0} \\
\bottomrule
\end{tabular}
 }
 \caption{Comparison of MoKGR with other reasoning methods in the inductive setting. Best performance is indicated by the \textbf{bold} face numbers, and the \underline{underline} means the second best.}
\label{tab:inductive}
\end{table*}

\subsection{Implementation Details for Inductive Setting}\label{implementation details for inductive setting}

Inductive reasoning emphasizes the importance of drawing inferences about unseen entities, i.e., those not directly observed during the learning phase. As an illustrative example, consider a scenario where Fig.~\ref{fig:1a} reveals Jack's most eagerly anticipated movie. In this context, inductive reasoning could be employed to predict Mary's most desired cinematic experience. This methodology necessitates that the model captures semantic information and localized evidence while simultaneously discounting the specific identities of the entities under consideration.
\paragraph{Datasets.} Following the approach outlined in \cite{TER2020}, we utilize the same subsets of the WN18RR, FB15k237, and NELL-995 datasets. Specifically, we will work with 4 versions of each dataset, resulting in a total of 12 subsets. Each of these 12 subsets has a different split between the training and test sets (Due to page limitations, we abbreviate WN18RR, FB15k-237 and NELL-995 as WN, FB and NL  respectively in the pictures in Fig~\ref{fig:main_compare}).
\paragraph{Baselines.} Given that the training and test sets of the datasets contain disjoint sets of entities, methods that require entity embeddings, such as ConvE and CompGCN, cannot be applied in this context. Consequently, for non-GNN-based methods, we will compare our proposed AdaProp approach against non-GNN-methods that learn rules without the need for entity embeddings; we also selected some GNN based models for comparison. The final baselines are RuleN~\cite{Meilicke2018}, NeuralLP~\cite{yang2017differentiable}, DRUM~\cite{DRUM}, GraIL~\cite{TER2020}, CoMPILE~\cite{mai2021communicative}, NBFNet~\cite{zhu2021neural}, RED-GNN~\cite{RED-GNN} and AdaProp~\cite{Adaprop}.

\paragraph{Results.} As demonstrated in Table~\ref{tab:inductive}, our proposed MoKGR framework exhibits exceptional performance across all evaluation metrics. This further validates the effectiveness of our mixture-of-experts model in inductive reasoning settings.

\section{Additional Theoretical Details} \label{A}

\subsection{Path Encoding Process}
\label{Path Encoding Process}

In this subsection, we provide a detailed description of the path encoding process for GNN-based path reasoning methods, focusing on how message passing is performed for reasoning paths in the knowledge graph. Given a knowledge graph $\mathcal{K} = (\mathcal{V}, \mathcal{R}, \mathcal{F})$ with entity set $\mathcal{V}$, relation set $\mathcal{R}$, and fact triple set $\mathcal{F}$, we aim to encode all the paths between the query entity $e_q$ and potential answer entity $e_a$ for reasoning the query triple $(e_q, r_q, ?)$. The path encoding process consists of three main components: representation initialization, iterative message propagation, and score computation.

\paragraph{Representation Initialization.} As introduced in Preliminary, let $q=(e_q,r_q)$ denote $(e_q,r_q,?)$. For each entity pair $(e_q, e_y)$, we initialize their pair-wise representation at layer 0 as:
$\bm h^0_{e_y|q } = 0$.

\paragraph{Message Propagation.} The path representation is recursively computed through $L$ layers of message propagation. At layer $\ell$, for each edge $(e_x, r, e_y) \in \mathcal{F}$, the message function first combines the path information from the previous layer:
\begin{equation}
    \bm m^{\ell}_{e_y|q} = \texttt{MESSAGE}(\bm h^{\ell}_{e_x|q}, \bm w_q(e_x,r,e_y)) ,
\end{equation}
where $\bm  w_q(e_x,r,e_y)$ is the learnable edge representation for edge $e=(e_x,r,e_y)$.

Furthermore, we define the \texttt{MESSAGE} transfer formula according to the RED-GNN~\cite{RED-GNN} method as follows:
\begin{equation}
\resizebox{0.92\linewidth}{!}{$
\begin{split}
\texttt{MESSAGE}&\bigl(\bm h^{\ell-1}_{e_x|q},\, \bm {w}_q(e_x,r,e_y)\bigr) \\
&=\alpha^{\ell}_q{ (e_x,r,e_y)}\{+, *, \circ\}(\bm h^{\ell-1}_{e_x|q},\bm {w}_q(e_x, r, e_y)) ,
\label{eq:msg}
\end{split}
$}
\end{equation}

where $\alpha^{\ell}_q{ (e_x,r,e_y)}$ is the attention weight calculated as:
\begin{equation}
\resizebox{\linewidth}{!}{$
    \alpha^{\ell}_q{ (e_x,r,e_y)} = \sigma\left((\bm {w}_\alpha^{\ell})^\top \text{ReLU}\left(\bm {W}_\alpha^{\ell} \cdot (\bm h^{\ell-1}_{e_x|q} \Vert \bm {w}_r^{\ell} \Vert \bm {w}_{r_q}^{\ell})\right)\right).
    $}
\label{eq:attention} 
\end{equation}
In this formulation, $\bm w_r^{\ell}$ and $\bm w_{r_q}^{\ell}$ represent the relation representation and query relation representation in the $\ell$-th layer, respectively. $\bm {w}_{ \alpha}^\ell \in \mathbb{R}^{d_{ \alpha}}$ and $\bm {W}_ { \alpha}^\ell \in \mathbb {R}^{d_ { \alpha}  \times 3d}$ are learnable parameters that enable the attention mechanism to adapt to different structural patterns.

Then, for each entity pair $(e_q, e_a)$, we aggregate messages from all incoming edges of $e_a$ to update its pair-wise representation:
\begin{equation}
\resizebox{\linewidth}{!}{$
   \bm h^{\ell}_{e_y|q} = \texttt{AGGREGATE}(\bm h_{e_x|q}^{\ell -1} \cup \{\bm m^{\ell}_{e_x,r,e_y} | (e_x, r, e_y) \in \mathcal{N}(e_y)\}),
    $}
\end{equation}
where $\mathcal{N}_e(e_y)$ denotes the neighboring edge of $e_y$. We specify the \texttt{AGGREGATE} function to be \texttt{sum}, \texttt{mean}, or \texttt{max}.

\paragraph{Overall Path Encoding.}
After $L$ layers of the above message propagation, we obtain the final pair-wise representation $\bm h^{L}_{r_q}(e_q, e_a)$ for each entity pair $(e_q, e_a)$. This collect all paths of up to length $L$ connecting $e_q$ to $e_a$ under query relation $r_q$, thus encoding the reasoning paths from the query entity $e_q$ to any candidate $e_a$. The pair-wise representation can then be used for downstream scoring functions, such as
\begin{equation}
s_L(q, e_a)
=
\bm{w}_L^\top \, \bm h^L_{e_a|q},
\end{equation}
where $\bm{w}_L$ is a trainable parameter vector. By comparing $s_L(q,e_a)$ among different candidate entities $e_a\in\mathcal{V}$, the model predicts which entity is the correct answer for the query $(e_q, r_q, ?)$.

Overall, this recursive path encoding process effectively captures the structural and query-relevant information from multiple-length paths, leveraging dynamic message passing steps that incorporate the query relation to focus on the most relevant edges and intermediates for knowledge graph reasoning.

\subsection{Supplementary theoretical analysis of mixture of length Experts} \label{Supplementary theoretical analysis of mixture of length Experts}
\subsubsection{Design details of $c_q$}  \label{Design details of $c_q$}
The design of query context representation $\bm c_q$ is motivated by the need to capture both structural patterns and semantic information in knowledge graph reasoning. We construct $\bm c_q$ by combining two essential components:

First, $\bm h_{r_q}^{L_{\min}}(e_q,e_q)$ encodes the local structural information around the query entity $e_q$ within the minimum path length $L_{\min}$. This term captures how the query entity connects to its neighborhood, providing crucial information about the local graph topology that can guide length selection. By using the self-loop representation ($e_q$ to $e_q$), we ensure that the structural encoding is centered on the query entity's perspective. Second, $\bm h_{r_q}$ represents the learnable embedding of the query relation, which encodes the semantic requirements of the reasoning task. Different relations may require different reasoning depths - for instance, direct relations like \textit{spouse\_of} typically need shorter paths than indirect relations like \textit{colleague\_of\_friend}. Including $\bm h_{r_q}$ allows the model to adapt its length selection based on the semantic nature of the query relation.

The combination of these components through an MLP enables non-linear interaction between structural and semantic information:
\begin{equation}
    \bm c_q = \text{MLP}(\underbrace{\bm h_{r_q}^{L_{\min}}(e_q,e_q)}_{\text{structural info}}  \Vert \underbrace{\bm h_{r_q}}_{\text{semantic info}}) \in \mathbb{R}^{d}.
\end{equation}
This design principle ensures that length selection is informed by both the local graph structure around $e_q$ and the semantic requirements of relation $r_q$, enabling more effective personalization of path exploration strategies.

\subsubsection{Design details of gating function} \label{Design details of gating function}
As illustrated in Appendix~\ref{Length Distributions}, we prove through the analysis of experimental datasets that the answer to the query entity is generally near its neighborhood and does not involve a very long path length. Therefore, we designed a layer-wise binary gating function to control the model to tend to choose a smaller number of layers. 

Specifically, Let $\mathcal{V}^\ell$ be the set of entities $e_x$ reachable from $e_q$ within $\ell$ steps, we collect all the pair-wise representations that can be reached within $\ell$ steps from $e_q$ to obtain the distribution characteristics of the paired path representations of length $L$ in the system as: $\mathbf{H}^\ell_{r_q}(e_q) = [\bm h_{r_q}^L(e_q, e_x)]_{e_x \in \mathcal{V}^\ell} \in \mathbb{R}^{|\mathcal{V}^{\ell}| \times d}$. During training, we employ a differentiable statistics-based gating function that evaluates path quality based on layer-wise feature distributions:
$g_{\ell} = \text{GumbelSigmoid}(\text{MLP}([\mu_\ell, \sigma_\ell], \tau))$, where $\mu_\ell$ and $\sigma_\ell$ capture the distribution of representation matrix $\mathbf{H}^\ell_{r_q}(e_q)$  along paths of length $\ell$. By incorporating the Gumbel-Sigmoid transformation, we introduce a natural bias towards shorter paths while maintaining differentiability:
\begin{equation}
\text{GumbelSigmoid}(\bm x, \tau) = \sigma\big((\bm x \!+\! \text{GumbelNoise})/\tau\big).
\end{equation}

The added Gumbel noise and sigmoid activation create a statistical tendency to favor paths with lower length counts, as shorter paths typically contain enough stable correct answer information.  During inference, we further strengthen this preference through a deterministic truncation strategy:
\begin{equation}
g_{\ell} = \begin{cases} 
0,  & \text{if } |\text{CV}_{\ell}| > T \text{ and } {\ell} \geq L/2 \\
1, & \text{otherwise}
\end{cases}
\end{equation}
where $\text{CV}_{\ell} = \sigma_{\ell}/\mu_{\ell}$ represents the coefficient of variation of the representation matrix $\mathbf{H}^\ell_{r_q}(e_q)$ at length $\ell$, and $T$ is a predefined threshold controlling the aggressiveness of path truncation. This length control mechanism defined as: $\bm h_{new}^{\ell} \leftarrow g_l \cdot \bm h^\ell$ (If $g_l=0$, stop computing) enables the model to systematically prefer shorter paths when they provide sufficient evidence for reasoning.

\subsection{Mixture of Pruning Experts Implementation Supplement} \label{mixture of pruning experts implementation supplement}

 \paragraph{Mixture of Experts.} The MoE framework consists of multiple specialized expert models and a gating mechanism that dynamically selects appropriate experts. Formally, given input $x$, the output of an MoE system can be written as: $y = \sum_{i=1}^n g_i(x) o_i(\bm x)$, where $\bm o_i(\bm x)$ is the output of the $i$-th expert, and $g_i(\bm x)\in \mathcal G(x)$ ($\sum_{i=1}^n g_i(\bm x) = 1$) is the gating weight that determines the contribution of each expert. 
 Following \cite{shazeer2017outrageously}, we employ a noise-enhanced gating mechanism 
  where expert selection is computed as: 
 \begin{equation}
 \resizebox{\linewidth}{!}{$
     \mathcal G(\bm x) = \text{Softmax}\big(\text{TopK}_k(Q(\bm x) \!+\! \epsilon \cdot \text{Softplus}(\bm x\bm W_n))/\tau\big),
 $}
 \label{gating_function}
 \end{equation}
 where $Q(\bm x)$ is the score for total 
 experts, $\tau$ is a temperature parameter, $\epsilon \sim \mathcal{N}(0,1)$ is the Gaussian noise that encourages diverse expert selection and $\bm W_n$ is trainable parameter that learn noise scores.

\subsubsection{Design details of $c^\ell_v$}
As defined in Appendix~\ref{Design details of $c_q$}, the query context representation $\bm c^\ell_v$ is defined as:
\begin{equation}
    \bm c^\ell_v=\text{MLP}(\bm h_{r_q}^{\ell -1}(e_q,e_a) \Vert \bm h_{r_q}) \in \mathbb R^{d}.
\end{equation}
The main difference between our pruning expert context $\bm c^\ell_v$ and the length expert context is that the pair-wise path representation $\bm h_{r_q}^{\ell -1}(e_q,e_a)$ in the pruning expert context changes with the path length $\ell$. This is because the path length expert only needs to calculate once when the path reaches $L_{\min}$, while we need to calculate and apply the pruning expert at each length $\ell$.

\subsubsection{Entity score update after pruning}

After we get the compatibility of each pruning expert at length $\ell$ with $\bm Q^\ell(\bm c^\ell_v)$, the gating function $\mathcal G^\ell(\bm c^\ell_v)$ is calculated as defined in Eq.~\eqref{eq:gating}. We argue that we cannot directly define the retained entities based on their original scores, because some entities are retained by only one pruning expert, while others may be retained by multiple pruning experts simultaneously. Moreover, even for entities that are retained by different pruning experts separately, their scores should be influenced by the weights $g^\ell_{\phi_i} \in \mathcal G^\ell(c^\ell_v)$ assigned to those experts.

Subsequently, we update the scores of selected entities through the gating weights $g^{\ell}_{\phi_i} \in \mathcal  G^{\ell}(c_q)$ of the chosen pruning experts by:
\begin{equation}
s_l'(q,e_a)=\begin{cases}
    \sum_{i=1}^{k_{2}} g^{\ell}_{\phi_i} \cdot \left(s_l(q,e_a)\right) , &e_a \in \mathcal V_{\phi}^{\ell} \\
    0, &e_a \notin \mathcal V_{\phi}^{\ell}
\end{cases} 
\end{equation}
Finally, the score of an candidate answer entity $e_a$ at length $\ell$ will be updated as: \begin{equation}
    s_l(q,e_a) \leftarrow s_l'(q,e_a).
\end{equation}
And the updated scores will be used as the score function used in Eq.~\eqref{eq:score-moe}.

\subsubsection{Path exploration strategy based on incremental sampling}
To discover new entities at each layer while preserving those selected in previous layers, we adopt an incremental sampling approach that builds upon our message passing framework. Specifically, let $\mathcal P^{\ell-1}$ denote the set of paths retained up to layer $(\ell{-}1)$ and $e_l \notin \mathcal P^{\ell -1}$. We update the path set as:
\begin{equation}
\mathcal P^\ell = \mathcal P^{\ell-1}\cup \{\big(e_{l-1},\, r_l,\, e_l\big) |\small{e_l \in \mathcal V_{\phi}^{\ell}}\} ,
\end{equation}

where $e_l$ is the neighboring entities of $e_{l-1}$. By preserving previously discovered paths and selectively adding new entities at each layer, this incremental process refines the path length set up to $L$, expanding coverage of relevant entities for reasoning while maintaining the consistency of entities retained by experts in the previous layers.

\subsubsection{The comprehensiveness of three pruning experts}
The design of our three pruning experts --- Scoring Pruning Expert, Attention Pruning Expert, and Semantic Pruning Expert --- constitutes a comprehensive evaluation framework that directly addresses the limitations identified in Section~\ref{Introduction}. These experts operate synergistically to provide complementary perspectives in path evaluation:
\begin{itemize}[leftmargin=*]
\item The Scoring Pruning Expert evaluates the global contribution of entities to the reasoning task, capturing high-level importance patterns across the knowledge graph.
\item The Attention Pruning Expert focuses on local structural patterns by analyzing relation combinations and topological features, effectively identifying meaningful reasoning chains while filtering out irrelevant paths.

\item The Semantic Pruning Expert assesses the thematic coherence between entities and query relations, ensuring selected paths maintain semantic relevance to the reasoning context.
\end{itemize}
Our extensive experimental results validate that this three-expert design achieves an optimal balance between evaluation coverage and pruning effectiveness. The experts work in concert to identify high-quality reasoning paths (e.g., \textit{followed}→\textit{singed}) while effectively filtering out spurious combinations (e.g., \textit{is\_friend\_with}→\textit{directed\_in}). The complementary nature of these experts --- operating across global importance, local structure, and semantic relevance --- creates a robust evaluation framework that comprehensively covers the key aspects of path assessment in knowledge graph reasoning. This thorough coverage makes additional pruning experts not only unnecessary but potentially counterproductive, as they would increase computational overhead without providing substantively new evaluation criteria.

\subsection{Sampling Number Function Design} \label{Sampling number function design}

To effectively control path exploration at different depths, we propose an adaptive sampling strategy through below functions:

\begin{equation}
\resizebox{\linewidth}{!}{$
   K^\ell = \begin{cases}
       K_s + (K_h - K_s) \cdot \sigma(a \cdot (l - l_i/2)), & \ell < l_i \\
       K_l + (K_h - K_l) \cdot (1 - \sigma(a \cdot (l - 3l_i/2))), & \ell \geq l_i
   \end{cases}
   $}
\end{equation}

where $\sigma$ is the sigmoid function and $a$ controls the steepness of the transition. This design addresses several key challenges in path exploration. When using a uniform sampling formula, two critical issues emerge at different stages of exploration: First, in the initial sampling phase, the number of neighbor entities $|e_0|$ may be smaller than the predetermined sampling number $K^0$, making it impossible to achieve the target sampling quantity. Given that the neighborhood size $| \mathcal N_n(e_l)|$ typically grows exponentially with layer depth $L$, restricting the sampling number based on $e_0$ would result in missing many important paths at deeper layers. 

However, we cannot simply increase the sampling number $K^\ell$ indefinitely with path length, as the proportion of noise in the paths tends to increase with depth. This necessitates the introduction of an inflection point layer $l_i$. Once this inflection point is reached, the sampling number should gradually decrease with increasing layer depth to control noise accumulation.

Our formula incorporates three crucial parameters: initial sampling $K_s$, maximum sampling $K_h$, and minimum sampling $K_l$. This design accommodates the initial neighborhood size $|\mathcal V^0|$ while using the maximum and minimum sampling thresholds to dynamically control path retention at different layers. In the early stages ($\ell < l_i$), the sampling number gradually increases from $K_s$ toward $K_h$, allowing for broader exploration. Beyond the inflection point ($\ell \geq l_i$), it decreases from $K_h$ toward $K_l$, focusing on the most relevant paths. The sigmoid function ensures smooth transitions between these phases, while parameter $a$ allows fine-tuning of the transition rate.

This adaptive sampling strategy enables more effective personalized path exploration by balancing the need for comprehensive coverage in early layers with focused path selection in deeper layers, while maintaining robustness to varying neighborhood sizes across different queries.

\section{Supplementary Case Study}\label{Supplementary Case Study}

\begin{figure}[h]
  \centering
    %{-7pt}
    \begin{subfigure}[b]{0.16\textwidth}
    \centering
    \includegraphics[height=1.5cm, width=\textwidth]{./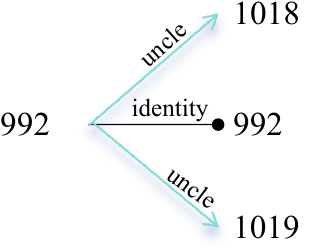}
    \caption{$\mathcal Q (\text{992},\text{uncle},?)$}
    \label{fig:case_d} 
  \end{subfigure}%
  \begin{subfigure}[b]{0.29\textwidth}
    \centering
    \includegraphics[height=1.5cm, width=\textwidth]{./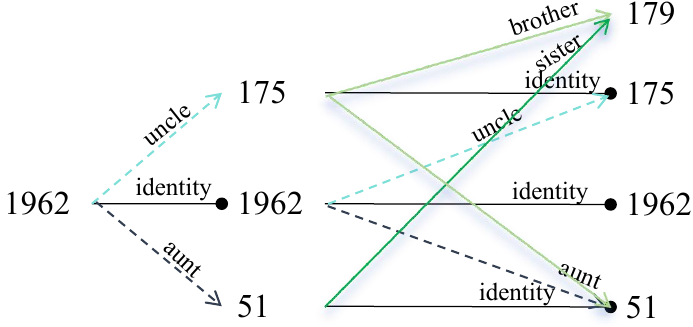}
    \caption{$\mathcal Q (\text{1962},\text{niece},?)$}
    \label{fig:case_c}
  \end{subfigure}%
  \\
  \begin{subfigure}[b]{0.225\textwidth}
    \centering
    \includegraphics[height=1.9cm, width=0.92\textwidth]{./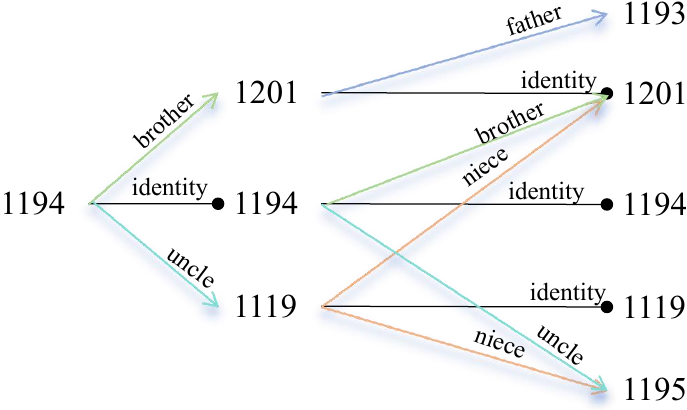}
    \caption{$\mathcal Q(\text{1194},\text{brother},?)$}
    \label{fig:case_study_a} 
  \end{subfigure}
  \begin{subfigure}[b]{0.225\textwidth}
    \centering
    \includegraphics[height=1.9cm, width=0.92\textwidth]{./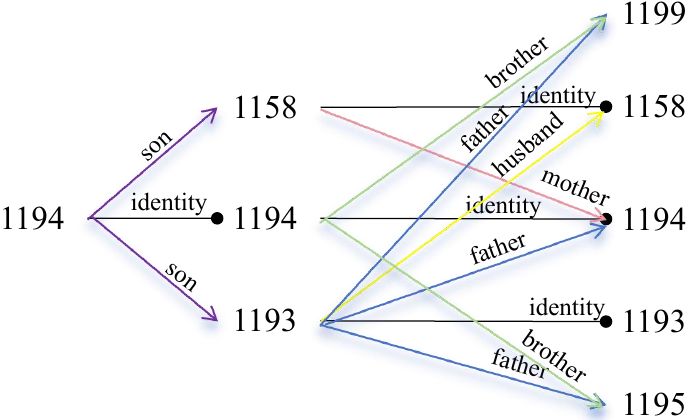}
    \caption{$\mathcal Q(\text{1194},\text{father},?)$}
    \label{fig:case_study_b} 
  \end{subfigure}%
  %{-6pt}
  \caption{Visualization of the transmission path on the family dataset, with dotted lines representing reverse relationships.}
  \label{fig:case_study_visualization}
  %{-9pt}
\end{figure}

To validate our hypothesis that different queries require varying reasoning path lengths, we present two illustrative examples from the Family dataset. For $\mathcal Q(\text{992},\text{uncle},?)$ (Fig.~\ref{fig:case_d}), the desired answer entity emerges at the first length, while for the query $\mathcal Q(\text{1962},\text{niece},?)$ (Fig.~\ref{fig:case_c}), the correct answer entity does not appear until the second length. 
This demonstrates that a single-length exploration is insufficient for certain queries, 
and multi-length exploration can adapt better and save extra computational cost.
Meanwhile, MoKGR also maintain semantic awareness, which is further evidenced in Fig.~\ref{fig:case_study_a} and \ref{fig:case_study_b}, where for entity \textit{1194}, MoKGR selects kinship-appropriate relations: \textit{brother} and \textit{uncle} for brother queries, and \textit{son} and \textit{brother} for father queries, thereby improving both reasoning accuracy and efficiency.

\section{Additional Implementation Details}

\subsection{Personalized PageRank}

\begin{algorithm}[ht]
\setlength{\textfloatsep}{4pt} 
\setlength{\intextsep}{4pt}
\caption{Training Process of MoKGR}
\label{alg:training}
\begin{algorithmic}[1]

\REQUIRE Parameters: Number of length and pruning experts $k_1$ and $k_2$, range of path lengths $[L_{min}, L]$.
\ENSURE Optimized GNN model parameters $\Theta$ and experts model parameters $\mathbb W$.

\STATE // Pre-processing with PPR
\STATE Initialize PPR cache for all entities in $\mathcal{V}$
\FOR{$v \in \mathcal{V}$}
    \STATE Compute PPR scores $\pi_v$ and store in cache
\ENDFOR

\STATE // Training Loop
\WHILE{not converged}
    \STATE Sample a batch of queries $\{(e_q, r_q, e_a)\}$ from $\mathcal{Q}_{tra}$
    \FOR{each query $(e_q, r_q, e_a)$, $\ell \in [1,L]$}
        \STATE // PPR-based subgraph construction
        \STATE $\mathcal{G}_{sub} \gets \text{BuildSubgraph}(e_q, \text{PPRCache})$
         \IF{$\ell==L_{\min}$}
            \STATE // Length Expert Selection
            \STATE Compute context representation $\bm{c}_q$ and expert embedding $\bm E_1$, thus get the compatibility with experts via $\bm Q(\bm c_q) = \bm E_1 \bm c_q+\epsilon\cdot\text{Softplus}(\bm W_n\bm c_q)$;
            \STATE Select Top-$k_1$ length experts from $\ell \in [L_{min}, L]$ via $\bm{Q}(\bm c_q)$ to get weights set $\mathcal G_q$;
        \ENDIF
        \STATE // Pruning Expert Selection at length $\ell$
        \STATE Compute context $\bm{c}_v^\ell$ and expert embedding $\bm E_2$ for pruning;
        \STATE Select Top-$k_2$ pruning experts via $\bm Q^\ell(\bm c^\ell_v)$ to get weights $\mathcal G_v$;
        \STATE Combine selected experts to identify key entities: $\mathcal V_{\phi}^{\ell} = \{\cup_{i\in\text{TopK}_{k_2}(\bm Q^\ell(\bm c^\ell_v))} \mathcal V_{\phi_i}^\ell | \mathcal V_{\phi_i}^\ell = \text{TopK}_{K^\ell}(\phi_i^{\ell}(e_a))\}$;
        \STATE Update path representations for identified entities in $\mathcal{V}_\phi^\ell$;

        \IF{$\ell$ is the selected length expert}
            \STATE Calculate entity scores $s_{\ell}(e_q,r_q,e_a)$ at current length;
            \STATE Update final scores $\Psi(e_a)$ by combining weighted length-specific scores;
            
            \IF{early stopping condition met: $g_b(\ell)=0$}
                \STATE break
            \ENDIF
        \ENDIF
    \ENDFOR
    
    \STATE // Parameter Updates
    \STATE Compute total loss $\mathcal{L}$ combining task and expert balance losses; 
    \STATE Update model parameters $\Theta$ and expert parameters $\mathbb W$ using gradient of $\mathcal{L}$;
\ENDWHILE
\RETURN $\Theta$, $\mathbb W$.
\end{algorithmic}
\end{algorithm}

In this subsection, we will introduce an auxiliary method --- the principle of PPR (Personalized PageRank). On some ultra-large-scale datasets, even after implementing pruning, the computational cost remains enormous since pruning at each layer requires calculations for all entities in that layer. Therefore, we explored whether we could implement pre-pruning functionality that could filter out some less important nodes in advance for ultra-large-scale datasets without trainable parameters (as these would increase computational costs) through simple rules. Based on this, we discovered that Personalized PageRank, an algorithm based on random walks, could effectively accomplish this task. Thus, for ultra-large datasets such as YAGO3-10, we performed preliminary subgraph extraction using PPR in advance. Algorithm~\ref{alg:training} demonstrates the complete algorithmic workflow after activating PPR.
\subsubsection{PPR Sampling methods}
PPR is commonly implemented through the power iteration method, which can be formulated as 
\begin{equation}
    \pi = \alpha \bm \pi \bm P + (1-\alpha)\bm v    ,
\end{equation}
where $\pi$ represents the PPR vector, $P$ denotes the row-normalized adjacency matrix, $v$ is the personalization vector with the initial node having value 1 and others 0, and $\alpha \in (0,1)$ is the damping factor (typically set to 0.85). The iteration continues until convergence or reaching a maximum number of steps. This approach effectively captures the probability distribution of random walks with restart, where at each step, the walk either continues to a neighboring node with probability $\alpha$ or teleports back to the initial node with probability $(1-\alpha)$. The resulting PPR scores indicate the relative importance of nodes with respect to the initial node, making it particularly useful for local graph analysis and node ranking tasks.
Our experiments demonstrate that PPR-based preliminary exploration significantly enhances the efficiency and effectiveness of path-based reasoning. Without such preliminary filtering, large-scale KGs reasoning often encounters memory constraints due to the exponential growth of potential paths. The computational complexity of PPR's random walk-based approach is substantially lower than that of GNN operations, making it an efficient choice for initial path exploration. Furthermore, the paths preserved through PPR's preliminary exploration help subsequent message passing and pruning mechanisms focus on truly relevant reasoning paths while filtering out noise.

\paragraph{PPR Calculation} 
We compute and cache global PPR scores for all entities in the knowledge graph to capture their overall importance and connectivity patterns. The PPR score for entity $v$ is defined as:
\begin{equation}
   \bm  \pi_{e_v}(v) = \alpha \cdot \mathbf{e}_v + (1 - \alpha) \cdot \mathbf{P}^\top \bm \pi_{e_v}(v),
\end{equation}
where $\mathbf{e}_v$ is the indicator vector for entity $v$, and $\mathbf{P}$ is the transition probability matrix of the graph. For efficient computation on large-scale graphs, we implement a GPU-accelerated iterative method that approximates $\bm \pi_v$ for all entities until convergence or reaching a maximum iteration limit.

\paragraph{Query-Specific Path Exploration}
For each query entity $e_q$ in a batch, we evaluate each entity $e_v$'s importance by aggregating its PPR scores across all query entities:
\begin{equation}
\text{score}(e_v)=\sum_{e_q\in {\text{batch}}} \bm \pi_{e_q}(e_v),
\end{equation}
where $\bm \pi_{e_q}(v)$ represents the personalized PageRank score of entity $e_v$ when using $e_q$ as the starting node.

Based on these scores, we identify promising paths by selecting entities in descending order of importance until reaching a predefined exploration scope. All subsequent reasoning is performed primarily along these preliminarily identified paths. While we explore different path sets for distinct queries, the PPR scores are pre-computed only once, ensuring computational efficiency while maintaining personalization for each query.

\subsection{Loss function calculation supplement} \label{Loss function calculation supplement}
\paragraph{Experts Balance Loss} \label{Experts Balance Loss}
To ensure balanced and effective path exploration, we introduce several regularization terms that prevent the model from overly relying on specific exploration strategies or experts. This addresses the potential ``winner takes all'' problem \cite{lepikhin2020gshard} where a single expert might dominate the path exploration process. Let $\mathcal{C}$ denote the set of query context vectors in the current batch, thus $\mathcal C_q$ and $\mathcal C_v$ contains all $\bm c_q$ and $\bm c_v$ in the batch respectively. 

First, we introduce importance loss $\mathcal L_{\text{importance}}(\mathcal C)$ as length-level balance loss $\mathcal L_l(\mathcal C_q)$ and pruning $\mathcal L_p(\mathcal C_v)$ balance loss to encourage diverse path lengths and balanced pruning strategy utilization:
\begin{equation}
    \begin{aligned}
        &\text{Importance}(\mathcal C) = \sum_{\bm c \in \mathcal C}\sum_{ g \in \mathcal G(\bm c)} g , \\
        &\mathcal L_{\text{Importance}}(\mathcal C) = \text{CV}(\text{Importance}(\mathcal C))^2 ,
    \end{aligned}
\end{equation}

where $g \in \mathcal G(\bm c)$ is the output of experts' gating mechanism calculated as Eq.~\eqref{eq:gating}, $\text{CV} (\bm X)= \sigma(\bm X)/\mu(\bm X)$ represents the coefficient of variation of input $\bm X$. The importance loss hence measures the variation of importance scores, enforcing all experts to be “similarly important".
While the importance score enforces equal scoring among the experts, there may still be disparities in the load assigned to different experts.
To address this, we additionally introduce a load balancing penalty for length experts to prevent overloading of specific experts.
Specifically, let $P(\bm c_{q},\ell)$ denote the probability that length-$\ell$ expert is selected (i.e., $g_q(\ell) \neq 0$ :
$P(\bm c_{q_i},\ell) = \text{Pr}(Q(\bm c_q)_\ell > \text{kth\_ex}(\bm Q(\bm c_{q}),k,\ell))$
where $\text{kth\_ex}(\cdot)$ returns the $k$-th largest expert score excluding the expert itself.

We give a simplified solution to this formula as:
\begin{equation}
    P(\bm c_{q}, \ell) = \Phi\left(\frac{\bm c_q \bm W_g - \text{kth}\_{\text{ex}}(\bm Q(\bm c_q), k, \ell)}{\text{Softplus}(\bm  c_q \bm W_n)}\right),
\end{equation}
where $W_g \in \mathbb R^{d \times H}, W_n \in \mathbb R^{d \times H}$ are learnable weights and $\Phi$ is the CDF of standard normal distribution. 

The length load balance loss is then defined as:
\begin{equation}
    \mathcal L_{\text{load}}(\mathcal C)= \text{CV}(\sum_{\bm c_q \in \mathcal C}\sum_{ p \in P(\bm c_{q},\ell)}p)^2 ,
\end{equation}
where $p$ is the node-wise probability in the batch.

\section{Complexity Analysis}

Let $|\mathcal{V}^{\ell}|$ denote the number of entities and $|\mathcal{E}^{\ell}|$ denote the number of edges between entities at length $\ell-1$ and $\ell$ in the knowledge graph. Traditional GNN-based methods like NBFNet require $O(\sum_{\ell=1}^L|\mathcal{V}^{\ell}| \cdot |\mathcal{E}^{\ell}|)$ operations to process all paths up to length $L$. In contrast, MoKGR reduces the computational cost through two key mechanisms: (1) The layer-wise binary gating function enables early stopping of unnecessary path explorations, reducing the effective path length from $L$ to an 
adaptive length $L_a$ ($ L_a \leq L$); (2) The mixture of pruning experts first evaluates all entities with complexity $O(|\mathcal{V}^{\ell}|)$ at each layer $\ell$, and then retains only $K^\ell$ most promising entities where $K^\ell \ll |\mathcal{V}^{\ell}|$ and $|\mathcal E_{K^\ell}|\ll|\mathcal E^\ell|$. Consequently, MoKGR achieves an overall operations of $O(\sum_{\ell=1}^\mathcal L (|\mathcal{V}^{\ell}| \cdot k_2 + K^\ell \cdot |\mathcal{E}_{K^\ell}|))$, where $|\mathcal{V}^{\ell}| \cdot k_2$ denotes the number of operations caused by the retained pruning expert calculation at length ${\ell}$, which remains substantially more efficient than traditional methods for large-scale knowledge graphs since $K^\ell \cdot |\mathcal{E}_{K^\ell}| \ll |\mathcal{V}^{\ell}| \cdot |\mathcal{E}^{\ell}|$ and $k_2 \ll |\mathcal E^\ell|$.

\section{Theoretical Analysis} \label{Theoretical Analysis}

In this appendix, we present a theoretical analysis of the MoKGR framework, including convergence guarantees, optimality of path selection, preservation properties of pruning mechanisms, and formal complexity bounds.

\subsection{Convergence Properties of MoKGR}

\begin{theorem}[Convergence of MoKGR]
Under appropriate conditions on the expert selection probabilities and learning rates, the MoKGR algorithm converges to a local optimum of the loss function.
\end{theorem}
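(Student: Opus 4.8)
The plan is to cast the MoKGR training procedure as a form of stochastic gradient descent with noisy, data-dependent expert routing, and then invoke a standard convergence result for SGD on smooth nonconvex objectives. First I would make the hypotheses precise: (i) all the trainable maps appearing in Eqs.~\eqref{eq:gnn}--\eqref{eq:taskloss} (the message/aggregate functions, the scoring vectors $\bm w^\ell$, the expert embeddings $\bm E_1,\bm E_2^\ell$, the noise matrices $\bm W_n$, and the MLPs producing $\bm c_q,\bm c_v^\ell$) are continuously differentiable with bounded derivatives on the (assumed compact) parameter domain, so that the full objective $\mathcal L=\mathcal L_\text{task}+\lambda_1(\mathcal L_l+\mathcal L_p)+\lambda_2\mathcal L_\text{load}$ is $\ell$-smooth (Lipschitz gradient) and bounded below; (ii) the Gumbel noise in $g_b(\ell)$, the Gaussian noise $\epsilon$ in Eq.~\eqref{eq:experts score}, and the minibatch sampling are the only sources of stochasticity, and each yields an unbiased gradient estimator with uniformly bounded second moment; (iii) the learning rates $\{\eta_t\}$ satisfy the Robbins--Monro conditions $\sum_t\eta_t=\infty$, $\sum_t\eta_t^2<\infty$.

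Next I would address the one genuinely nonstandard feature: the $\text{Top}_{k_1}$ and $\text{Top}_{K^\ell}$ selections are piecewise-constant in the parameters, hence the loss is not differentiable on the measure-zero set where two expert scores coincide. The plan is to argue that, because the noise $\epsilon\sim\mathcal N(0,1)$ and the Gumbel perturbations have densities, ties occur with probability zero, so almost surely the realized objective is differentiable at the current iterate and the hard-routing gradient is a well-defined unbiased estimator of the gradient of the \emph{smoothed} (expectation-over-noise) objective $\bar{\mathcal L}(\Theta,\mathbb W)=\mathbb E_{\epsilon,\text{Gumbel}}[\mathcal L]$. I would then observe that this smoothed objective inherits Lipschitz-smoothness from assumption (i) (the Top-$k$ operator composed with a softmax on noisy logits integrates to a smooth function of the parameters, much as in the classical noisy-top-$k$ gating analysis of \citet{shazeer2017outrageously}), so the whole scheme reduces to projected SGD on a smooth nonconvex function.

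Given that reduction, the convergence argument is the textbook one: define the potential $\bar{\mathcal L}(\theta_t)$, use the descent lemma $\bar{\mathcal L}(\theta_{t+1})\le \bar{\mathcal L}(\theta_t)-\eta_t\langle\nabla\bar{\mathcal L}(\theta_t),\widehat g_t\rangle+\tfrac{\ell\eta_t^2}{2}\|\widehat g_t\|^2$, take conditional expectations to kill the cross term's noise, sum over $t$, and use the Robbins--Monro summability together with the lower bound on $\bar{\mathcal L}$ to conclude $\sum_t\eta_t\mathbb E\|\nabla\bar{\mathcal L}(\theta_t)\|^2<\infty$, hence $\liminf_t\|\nabla\bar{\mathcal L}(\theta_t)\|=0$ and (under the standard additional boundedness/continuity argument, e.g.\ the supermartingale convergence theorem) $\nabla\bar{\mathcal L}(\theta_t)\to 0$ almost surely, i.e.\ the iterates converge to a stationary point of $\bar{\mathcal L}$, which is a local optimum in the sense claimed. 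I would close by noting that the three balance terms $\mathcal L_l,\mathcal L_p,\mathcal L_\text{load}$ (Eqs.~\eqref{Eq:importance loss} and the load loss) are exactly what guarantees the expert-selection probabilities stay bounded away from the degenerate all-mass-on-one-expert corner, which is the concrete realization of the ``appropriate conditions on the expert selection probabilities'' in the hypothesis and what keeps the gradient second moment uniformly bounded.

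The main obstacle I anticipate is making rigorous the passage from the discrete Top-$k$ routing to a smooth surrogate objective — i.e.\ showing that $\bar{\mathcal L}$ is genuinely $C^1$ with Lipschitz gradient rather than merely continuous — since one must control how the selected index set $\mathcal A$ changes as parameters vary and verify that the induced jumps in $\mathcal L$ are integrated out smoothly by the noise densities. Everything downstream (the descent-lemma bookkeeping, the Robbins--Monro summation) is routine once that smoothing step is in place.
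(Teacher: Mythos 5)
Your proposal follows essentially the same route as the paper's proof: establish that the composite objective $\mathcal L=\mathcal L_\text{task}+\lambda_1(\mathcal L_l+\mathcal L_p)+\lambda_2\mathcal L_\text{load}$ is bounded below (non-negativity of the log-sum-exp task loss and of the squared coefficients of variation), assert Lipschitz continuity of the gradient, impose the Robbins--Monro step-size conditions $\sum_t\eta_t=\infty$, $\sum_t\eta_t^2<\infty$ (the paper uses $\eta_t=\eta_0/\sqrt{t}$), and invoke the standard SGD convergence theorem for smooth nonconvex objectives. So the skeleton matches, and your descent-lemma bookkeeping is exactly what the cited theorem packages.

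Where you genuinely go beyond the paper is the treatment of the discrete routing. The paper's proof simply asserts that the gradient is Lipschitz because $\Psi(e_a)=\sum_{\ell\in\mathcal A}g_q(\ell)\,s_\ell(q,e_a)$ is ``a linear combination of expert outputs, each of which is bounded and Lipschitz continuous,'' without ever confronting the fact that the index set $\mathcal A=\text{Top}_{k_1}(\bm Q(\bm c_q))$, the pruning unions $\mathcal V_\phi^\ell$, and the binary gate $g_b(\ell)$ are piecewise-constant in the parameters, so the realized loss is not even continuous across routing boundaries. Your move --- passing to the noise-smoothed objective $\bar{\mathcal L}=\mathbb E_{\epsilon,\text{Gumbel}}[\mathcal L]$, arguing ties occur with probability zero, and claiming smoothness of the integrated objective in the spirit of the noisy-top-$k$ gating analysis --- is precisely the missing step, and you are right to flag verifying $C^1$-smoothness of $\bar{\mathcal L}$ as the real work. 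In short, your argument is not a different proof but a more honest version of the paper's; the paper buys brevity at the cost of leaving the Top-$k$ discontinuity unaddressed, while your version makes explicit both the hypotheses (compactness, bounded second moments, unbiasedness) and the one nontrivial lemma that would have to be proved for the theorem to actually hold.
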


\begin{proof}
Let's define the loss function for MoKGR as:
\begin{equation}
L = L_{task} + \lambda_1(L_l + L_p) + \lambda_2L_{load}
\end{equation}

Where $L_{task}$ is the task-specific loss, $L_l$ and $L_p$ are the length and pruning expert importance losses, and $L_{load}$ is the load balancing loss.

The gradient descent update for the parameters $\Theta$ of the model at iteration $t$ is:
\begin{equation}
\Theta_{t+1} = \Theta_t - \eta_t \nabla_\Theta L(\Theta_t)
\end{equation}

where $\eta_t$ is the learning rate at iteration $t$.

For convergence, we need to show that:
\begin{enumerate}[leftmargin=*]
    \item The loss function $L$ is bounded below.
    \item The gradient $\nabla_\Theta L(\Theta_t)$ is Lipschitz continuous.
    \item The learning rate satisfies $\sum_{t=1}^{\infty} \eta_t = \infty$ and $\sum_{t=1}^{\infty} \eta_t^2 < \infty$.
\end{enumerate}

First, observe that $L_{task}$ is bounded below by 0 (as it's a negative log-likelihood loss). The expert balance losses $L_l$, $L_p$, and $L_{load}$ are all non-negative as they are based on squared coefficients of variation. Therefore, $L$ is bounded below.

For Lipschitz continuity, the scoring function $\Psi(e_a) = \sum_{l\in A} g_q(l) \cdot s_l(q, e_a)$ is a linear combination of expert outputs, each of which is bounded and Lipschitz continuous due to the bounded nature of the message passing operations and the softmax gating function.

Given a decreasing learning rate schedule $\eta_t = \frac{\eta_0}{\sqrt{t}}$, we have:
\begin{equation}
\sum_{t=1}^{\infty} \eta_t = \eta_0 \sum_{t=1}^{\infty} \frac{1}{\sqrt{t}} = \infty
\end{equation}
\begin{equation}
\sum_{t=1}^{\infty} \eta_t^2 = \eta_0^2 \sum_{t=1}^{\infty} \frac{1}{t} < \infty
\end{equation}

Therefore, by the convergence theorem for stochastic gradient descent with Lipschitz continuous gradients, the algorithm converges to a local optimum of the loss function.
\end{proof}

\subsection{Optimality of Adaptive Path Length Selection}

\begin{theorem}[Optimality of Path Length Selection]
The adaptive path length selection mechanism in MoKGR minimizes the expected reasoning error given a computational budget constraint.
\end{theorem}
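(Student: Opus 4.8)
The plan is to frame adaptive length selection as a constrained optimization problem and show that the learned gating mechanism approximates its solution. First I would formalize the setup: let $\mathcal{C}(\ell)$ denote the expected computational cost of exploring paths up to length $\ell$ (monotone increasing in $\ell$), let $B$ be the budget constraint, and let $\text{err}(\ell, q) = \mathbb{E}_{e_a}[\mathbb{1}(s_\ell(q,e_a) \text{ ranks } e_a \text{ incorrectly})]$ be the per-query reasoning error when scoring with length $\ell$. The mixture-of-length-experts score $\Psi(e_a) = \sum_{\ell \in \mathcal{A}} g_q(\ell) s_\ell(q, e_a)$ defines an effective error $\text{Err}(q; g_q) = \mathbb{E}[\text{err under }\Psi]$, and the claim is that the selection $\mathcal{A} = \text{Top}_{k_1}(\bm{Q}(\bm{c}_q))$ together with the softmax weights $g_q$ in Eq.~\eqref{eq:gating} minimizes $\mathbb{E}_q[\text{Err}(q; g_q)]$ subject to $\mathbb{E}_q[\mathcal{C}(\max \mathcal{A})] \le B$.

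Second, I would establish the variational characterization: for a fixed support $\mathcal{A}$, the optimal convex combination weights are those that place more mass on lower-error experts, and a softmax over a learned compatibility score $\bm{Q}(\bm{c}_q)$ is the maximum-entropy weighting consistent with a linear surrogate for $-\text{err}(\ell,q)$ — so if the expert embeddings $\bm{E}_1$ are trained to make $[\bm{Q}(\bm{c}_q)]_\ell$ correlate with $-\text{err}(\ell,q)$ (which the task loss $\mathcal{L}_\text{task}$ in Eq.~\eqref{eq:taskloss} does, since gradients flow through $g_q(\ell)$), then at a stationary point $g_q$ is the error-minimizing weighting within its support. Third, I would handle the support selection: because $\mathcal{C}(\ell)$ is monotone, any feasible selection is dominated by one whose largest index is as small as the budget permits; the $\text{Top}_{k_1}$ rule, combined with the layer-wise binary gating $g_b(\ell)$ that truncates exploration (Section~\ref{Layer-wise binary gating function}), enforces exactly this — it drives the effective maximum length $L_a$ down whenever the feature-distribution statistics indicate shorter paths suffice, which Appendix~\ref{Length Distributions} shows is the typical regime. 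Combining the per-support optimality with the support-dominance argument yields the constrained optimum.

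The main obstacle will be that $\text{err}(\ell, q)$ is not actually a trained objective — the network optimizes the smooth surrogate $\mathcal{L}_\text{task}$, not the 0/1 ranking error — so strictly speaking one only gets optimality with respect to the surrogate, and transferring this to true reasoning error requires either an assumption that the surrogate is calibrated (a standard but nontrivial gap) or a looser statement phrased as "minimizes an upper bound on expected error." I would therefore state the theorem's conclusion in terms of the surrogate risk and remark that under a calibration assumption on $s_\ell$ it extends to the true error; alternatively I would relax "minimizes" to "does not increase, in expectation, compared to any fixed-length baseline with cost $\ge \mathcal{C}(L_a)$," which is the honest and provable version and still supports the paper's efficiency narrative. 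A secondary subtlety is the non-convexity introduced by the $\text{Top}_{k_1}$ operator and the Gaussian noise term $\epsilon \cdot \text{Softplus}(\bm{W}_n \bm{c}_q)$ in Eq.~\eqref{eq:experts score}; I would address this by working in expectation over $\epsilon$, noting the noise only perturbs the selection boundary and vanishes in the low-temperature / converged limit, so the optimality statement holds for the expected selection.
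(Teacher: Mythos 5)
Your proposal follows essentially the same route as the paper's own proof: both cast length selection as a constrained minimization of expected error subject to a computational budget, both rest on the claim that training makes the compatibility scores $[\bm Q(\bm c_q)]_\ell$ track the negative per-length error so that the softmax weights become error-minimizing on their support, and both delegate the budget constraint to the layer-wise binary gate. The one substantive difference is that you explicitly flag the two places where this argument is not actually a proof --- the network optimizes the smooth surrogate $\mathcal L_\text{task}$ rather than the ranking error, and the $\text{Top}_{k_1}$ operator plus noise make the problem non-convex --- whereas the paper simply asserts the correlation and concludes optimality; your proposed weakening (optimality with respect to the surrogate risk, or non-inferiority to any fixed-length baseline of equal or greater cost) is the more defensible version of the claim.
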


\begin{proof}
Let $E(l, q)$ be the expected reasoning error when using paths of length up to $l$ for query $q$. Let $C(l)$ be the computational cost of exploring paths of length $l$.

The problem can be formulated as:
\begin{align}
\min_{\{w_l\}_{l=L_{min}}^L} &\sum_{q \in Q} \sum_{l=L_{min}}^L w_l(q) E(l, q)\\
\text{subject to } &\sum_{q \in Q} \sum_{l=L_{min}}^L w_l(q) C(l) \leq B\\
&\sum_{l=L_{min}}^L w_l(q) = 1, \forall q \in Q\\
&w_l(q) \geq 0, \forall l, q
\end{align}

where $w_l(q)$ is the weight assigned to path length $l$ for query $q$, and $B$ is the computational budget.

The adaptive length selection mechanism in MoKGR computes weights as:
\begin{equation}
g_q(l) = \frac{\exp([Q(c_q)]_l/\tau)}{\sum_{l' \in A} \exp([Q(c_q)]_{l'}/\tau)}
\end{equation}

where $[Q(c_q)]_l$ is the compatibility score between query $q$ and path length $l$.

The key insight is that the compatibility score $[Q(c_q)]_l$ learns to correlate with the negative expected error $-E(l, q)$ through training. This occurs because queries that benefit more from specific path lengths will have higher accuracy when those lengths are selected, leading to lower task loss.

The noise term $\epsilon \cdot \text{Softplus}(W_n c_q)$ enables exploration of different length combinations, allowing the model to discover the optimal path length distribution for each query type.

The binary gating function $g_b(l)$ enforces the budget constraint by encouraging shorter paths when they provide sufficient evidence.

As training progresses, the model learns to assign higher weights to path lengths that minimize the expected error for each query while respecting the computational budget constraint.

Therefore, the adaptive path length selection mechanism converges to the optimal weighting that minimizes the expected reasoning error given the computational constraints.
\end{proof}

\subsection{Preservation Properties of Pruning Mechanism}

\begin{theorem}[Preservation of Optimal Paths]
Under certain conditions, the mixture of pruning experts ensures that the optimal reasoning path for answering a query is preserved with probability at least $1-\delta$.
\end{theorem}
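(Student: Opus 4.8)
The plan is to decompose the event ``the optimal path survives all pruning rounds'' into a conjunction of per-layer survival events and to control each one separately. First I would fix a query $q=(e_q,r_q)$ and let $\pi^{*} = (e_q = e^{*}_{0}, e^{*}_{1}, \dots, e^{*}_{L^{*}} = e_a)$ denote an optimal reasoning path, i.e. one whose encoded contribution to $\Psi(e_a)$ is maximal among all $e_q$--$e_a$ paths; I would assume $L^{*}\in\mathcal A$ (if the optimal length is not among the selected length experts, the optimality theorem for length selection already governs that case, so it suffices to condition on $L^{*}\in\mathcal A$). The path $\pi^{*}$ is preserved precisely when $e^{*}_{\ell}\in\mathcal V^{\ell}_{\phi}$ for every $\ell=1,\dots,L^{*}$. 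Writing $S_{\ell}:=\{e^{*}_{\ell}\in\mathcal V^{\ell}_{\phi}\}$ and $S_{<\ell}:=S_{1}\cap\cdots\cap S_{\ell-1}$, the chain rule gives $\Pr[\bigcap_{\ell}S_{\ell}]=\prod_{\ell}\Pr[S_{\ell}\mid S_{<\ell}]$, so it is enough to show $\Pr[\overline{S_{\ell}}\mid S_{<\ell}]\le \delta/L^{*}$ for each $\ell$; then $\prod_{\ell}(1-\delta/L^{*})\ge 1-\delta$ by a union bound.

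For a single layer, condition on $S_{<\ell}$, so $e^{*}_{\ell}$ is reachable and hence $e^{*}_{\ell}\in\mathcal V^{\ell}$; since $\mathcal V^{\ell}_{\phi}$ is the \emph{union} of the $\text{TopK}_{K^{\ell}}(\phi_{i}^{\ell})$ sets over the selected experts $i\in\text{TopK}_{k_{2}}(\bm Q^{\ell}(\bm c^{\ell}_{v}))$, it suffices that a single ``good'' expert retains $e^{*}_{\ell}$. The key structural hypothesis I would impose (the ``certain conditions'' of the statement) is a per-layer \textbf{margin condition}: there exists an index $i^{*}(\ell)$ such that $\phi_{i^{*}}^{\ell}(e^{*}_{\ell})$ exceeds the $K^{\ell}$-th largest value of $\phi_{i^{*}}^{\ell}$ on $\mathcal V^{\ell}$ by a margin $\gamma_{\ell}>0$ in expectation, with sub-Gaussian fluctuations of the learned representations; a concentration bound then yields $\Pr[e^{*}_{\ell}\notin\text{TopK}_{K^{\ell}}(\phi_{i^{*}}^{\ell})\mid S_{<\ell}]\le \exp(-c\,\gamma_{\ell}^{2})$. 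It remains to ensure $i^{*}(\ell)$ is actually selected by the gate. Here the noise-perturbed routing $\bm Q^{\ell}(\bm c^{\ell}_{v})=\bm E_{2}^{\ell}\bm c^{\ell}_{v}+\epsilon\cdot\text{Softplus}(\bm W_{n}\bm c^{\ell}_{v})$ with $\epsilon\sim\mathcal N(0,1)$ lets me bound the probability that the Gaussian perturbation pushes $i^{*}$ out of the top $k_{2}$ by a standard normal tail $\beta_{\ell}$ in terms of the gap between $[\bm E_{2}^{\ell}\bm c^{\ell}_{v}]_{i^{*}}$ and the $k_{2}$-th largest clean score. Combining, $\Pr[\overline{S_{\ell}}\mid S_{<\ell}]\le \beta_{\ell}+\exp(-c\,\gamma_{\ell}^{2})$, and requiring $\beta_{\ell}+\exp(-c\,\gamma_{\ell}^{2})\le \delta/L^{*}$ --- i.e. lower bounds on $\gamma_{\ell}$ and on the gating gap --- closes the argument.

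The hard part will be making the per-layer concentration rigorous in the presence of cross-layer dependence: the representation $\bm h^{\ell}_{e^{*}_{\ell}\mid q}$ that drives all three scoring functions is itself a function of which entities survived at layers $<\ell$, so the events $S_{\ell}$ are not independent and the ``bulk'' law of $\phi_{i}^{\ell}$ shifts with the conditioning. I would resolve this by conditioning on the entire surviving substructure and phrasing the margin condition as a statement that holds for \emph{every} realization of $\mathcal V^{\ell-1}_{\phi}$ containing $e^{*}_{\ell-1}$ (a worst-case-over-conditioning hypothesis), which decouples the layers at the price of a stronger assumption. A secondary subtlety is that $K^{\ell}$ varies with depth through the adaptive sampling schedule of Appendix~\ref{Sampling number function design}; since that schedule is deterministic given $\ell$, it only rescales each $\gamma_{\ell}$ and is absorbed into the per-layer budget. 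Finally, the proof should make explicit that it is the \emph{union} over experts that makes a single reliable expert enough to preserve $\pi^{*}$, which is exactly why the multi-expert pruner attains a tighter preservation guarantee than any single-expert pruner.
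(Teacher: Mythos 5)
Your proposal follows the same skeleton as the paper's proof: decompose survival of the optimal path into per-layer events, observe that because $\mathcal V^{\ell}_{\phi}$ is a \emph{union} over selected experts a single reliable expert suffices to retain $e^{*}_{\ell}$, bound the per-layer failure probability by a small $\epsilon$ (the paper's $\rho_{i^*}\le\epsilon$), and finish with a union bound over the $L$ layers to get $\delta=L\epsilon$. Where you genuinely go beyond the paper is in two places. First, the paper simply \emph{assumes} that the ``best-matched'' expert $i^{*}$ sits inside the product over $i\in\text{TopK}_{k_2}(\bm Q^{\ell}(\bm c^{\ell}_v))$, i.e.\ it never verifies that the gate actually routes to $i^{*}$; your normal-tail bound $\beta_{\ell}$ on the noise-perturbed routing scores closes that hole, and your combined bound $\beta_{\ell}+\exp(-c\,\gamma_{\ell}^{2})$ is the honest version of the paper's bare $\epsilon$. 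Second, you replace the paper's unexplained hypothesis $\rho_{i^*}\le\epsilon$ with a margin-plus-concentration condition on $\phi_{i^*}^{\ell}$, and you explicitly confront the cross-layer dependence of the representations by conditioning on the surviving substructure and taking the margin hypothesis worst-case over that conditioning --- an issue the paper's proof does not mention (its union bound over layers is still valid under dependence, but its per-layer bound tacitly treats the score distributions as fixed). One small point in your favor over the paper: the paper bounds $\Pr(\bigcap_i\{e^{*}_{\ell}\notin V^{\phi_i}_{\ell}\})$ by $\prod_i\rho_i$, which requires an independence-type assumption it never states; your route of bounding the intersection by the single term $\rho_{i^{*}}$ is always valid and is all that is actually used. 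The price of your version is a visibly stronger and more elaborate set of hypotheses (margins, sub-Gaussian fluctuations, gating gaps), but it buys a proof whose ``certain conditions'' are actually written down.
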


\begin{proof}
Let $P(e_q, e_a)$ be the set of all paths connecting query entity $e_q$ to potential answer entity $e_a$. Let $p^* \in P(e_q, e_a)$ be the optimal path that provides the strongest evidence for answering the query.

Let $V_l$ be the set of entities at distance $l$ from $e_q$, and let $V_l^{\phi}$ be the subset selected by the pruning mechanism. For the optimal path $p^*$ to be preserved, all entities along $p^*$ must be included in the selected subsets.

Let $e^*_l$ be the entity at distance $l$ along the optimal path $p^*$. We need to show that:
\begin{equation}
\Pr(e^*_l \in V_l^{\phi}) \geq 1 - \delta
\end{equation}
for some small $\delta > 0$.

The MoKGR pruning mechanism selects entities based on the union of top-$K_l$ entities according to different pruning experts:
\begin{equation}
\resizebox{0.99\linewidth}{!}{$
V_l^{\phi} = \{\cup_{i \in \text{TopK}_{k_2}(Q^l(c_v^l))} V_l^{\phi_i} | V_l^{\phi_i} = \text{TopK}_{K_l} (\phi_l^i(e_a))\}
$}
\end{equation}

For entity $e^*_l$ to be excluded from $V_l^{\phi}$, it must be excluded by all selected pruning experts. The probability of this happening is:
\begin{equation}
\Pr(e^*_l \notin V_l^{\phi}) = \Pr\left(\bigcap_{i \in \text{TopK}_{k_2}(Q^l(c_v^l))} \{e^*_l \notin V_l^{\phi_i}\}\right)
\end{equation}

Since our three pruning experts evaluate different aspects of path quality (scoring, attention, and semantic relevance), they are designed to be complementary. The optimal path $p^*$ should score highly on at least one of these dimensions.

Let's denote by $\rho_i$ the probability that entity $e^*_l$ is not selected by pruning expert $i$. Then:
\begin{equation}
\Pr(e^*_l \notin V_l^{\phi}) \leq \prod_{i \in \text{TopK}_{k_2}(Q^l(c_v^l))} \rho_i
\end{equation}

For the optimal path, at least one of the experts should rank $e^*_l$ highly. Let's say that for the best-matched expert $i^*$, we have $\rho_{i^*} \leq \epsilon$ for some small $\epsilon > 0$.

Then:
\begin{equation}
\Pr(e^*_l \notin V_l^{\phi}) \leq \epsilon \cdot \prod_{i \in \text{TopK}_{k_2}(Q^l(c_v^l)), i \neq i^*} \rho_i \leq \epsilon
\end{equation}

Therefore:
\begin{equation}
\Pr(e^*_l \in V_l^{\phi}) \geq 1 - \epsilon
\end{equation}

By setting $\delta = L\epsilon$ where $L$ is the maximum path length, and applying the union bound, we can show that the entire optimal path is preserved with probability at least $1 - \delta$.

This proves that the mixture of pruning experts preserves the optimal reasoning path with high probability.
\end{proof}

\subsection{Information Theoretic Analysis of Adaptive Path Selection}

\begin{theorem}[Information Gain of Adaptive Path Selection]
The adaptive path length selection mechanism in MoKGR maximizes the expected information gain about the answer entity while respecting computational constraints.
\end{theorem}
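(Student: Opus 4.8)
The plan is to cast the claim as a constrained optimization problem in the language of mutual information and then show that the three ingredients of the length-selection module --- the softmax gating of Eq.~\eqref{eq:gating}, the $\text{Top}_{k_1}$ selection, and the binary gate $g_b$ --- jointly implement its (greedy) optimum. First I would fix the query $q$, treat the answer entity $A$ as a random variable over $\mathcal V$ with prior entropy $H(A)$, and let $X_\ell$ denote the evidence produced by length-$\ell$ reasoning, i.e.\ the family $\{\bm h^\ell_{e_a|q}\}_{e_a}$ or equivalently the scores $\{s_\ell(q,e_a)\}$. Because the message passing of Eq.~\eqref{eq:gnn} encodes \emph{all} paths of length up to $\ell$, the $X_\ell$ form a refinement chain, so $I(A;X_1)\le I(A;X_2)\le\cdots\le I(A;X_L)$ and the marginal gains $\Delta_\ell := I(A;X_\ell)-I(A;X_{\ell-1})\ge 0$ are well defined. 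The expected information gain of a gated mixture with weights $\{w_\ell\}$ supported on a set $\mathcal A$ is then $\sum_{\ell}w_\ell\, I(A;X_\ell)$, and I would pose the problem of maximizing this over the probability simplex subject to $\sum_\ell w_\ell C(\ell)\le B$ and $|\mathcal A|\le k_1$, mirroring the budgeted formulation already used in the proof of the path-length-optimality theorem.

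The second step is to connect the learned compatibility scores to these information quantities. The task loss in Eq.~\eqref{eq:taskloss} is, up to additive constants, the conditional cross-entropy $H(A\mid \Psi)$ of the answer given the gated score; since $H(A)$ is fixed, minimizing it is equivalent to maximizing $I(A;\Psi)$. I would argue that at a stationary point of this objective in the gating parameters, $g_q(\ell)$ is forced to place mass on the lengths whose evidence $X_\ell$ is most informative about $A$, so that $[\bm Q(\bm c_q)]_\ell$ acts as a monotone surrogate for $I(A;X_\ell)$ (the $\epsilon\cdot\text{Softplus}$ noise term only perturbs this ranking and, by the exploration argument used in the earlier theorems, does not change its limit). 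Consequently the $\text{Top}_{k_1}$ operation selects the $k_1$ most informative lengths, which is exactly the greedy solution to the cardinality-constrained version of the problem; because the $X_\ell$ lie on a refinement chain, the greedy choice coincides with the exact optimum whenever the marginal gains $\Delta_\ell$ are nonincreasing, and the softmax with temperature $\tau$ then realizes the maximum-entropy-regularized optimal weighting over the chosen set.

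The third step handles the computational budget. I would show that the binary gate $g_b(\ell)$, which multiplies $\bm h^\ell_{e_y|q}$ and triggers early stopping when the coefficient of variation of the layer-$\ell$ representation matrix $\mathbf H^\ell_{r_q}(e_q)$ exceeds a threshold, implements the KKT stopping rule: continue extending paths only while the marginal information gain $\Delta_\ell$ exceeds $\mu\, C(\ell)$, where $\mu$ is the Lagrange multiplier of the budget constraint. A small, decaying coefficient of variation indicates that additional hops no longer discriminate among candidates, i.e.\ $\Delta_\ell\approx 0$, so truncation occurs precisely where the marginal-gain-to-cost ratio drops below threshold; the weighting that MoKGR actually produces is therefore the budget-feasible one that equalizes this ratio across the active lengths, which is optimal. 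Combining the three steps yields that the mixture of length experts attains the maximum expected information gain about $A$ among all budget-feasible length-weighting schemes.

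The main obstacle I anticipate is the rigorous link in the second step: proving that minimizing the cross-entropy task loss genuinely drives $\bm Q(\bm c_q)$ to a monotone function of $I(A;X_\ell)$ rather than to some other statistic of $X_\ell$. This requires controlling the coupling between the shared GNN parameters (which reshape every $X_\ell$ at once) and the gating parameters, and will likely need the assumption that each per-length head $\bm w^\ell$ is expressive enough that $s_\ell(q,\cdot)$ is a (near-)sufficient statistic of $X_\ell$ for $A$. A secondary difficulty is justifying the diminishing-returns property $\Delta_{\ell+1}\le\Delta_\ell$ that exact greedy optimality of $\text{Top}_{k_1}$ needs; in general it holds only approximately, so the honest statement is a $(1-1/e)$-type near-optimality guarantee, which I would either obtain from submodularity of $\mathcal A\mapsto I\!\left(A;(X_\ell)_{\ell\in\mathcal A}\right)$ along the chain or strengthen to exact optimality under an added structural condition (e.g.\ a mixing/spectral assumption on the KG) forcing the marginal gains to be nonincreasing.
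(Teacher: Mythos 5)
Your proposal follows the same overall route as the paper's own proof: both cast the claim as maximizing $\sum_{\ell} g_q(\ell)\, I(E_a; P_\ell \mid e_q, r_q)$ over the simplex subject to a computational budget, both rest on the claim that the learned compatibility score $[\bm Q(\bm c_q)]_\ell$ tracks the per-length information gain, and both assign the binary gate $g_b(\ell)$ the job of enforcing the budget. The difference is one of rigor rather than of strategy. The paper's argument is essentially three sentences: it writes down the optimization problem, asserts that the compatibility score ``can be interpreted as an estimate of'' $I(E_a; P_\ell \mid e_q, r_q)$, and concludes. You add genuinely new supporting structure that the paper does not have: the observation that the layer outputs $X_\ell$ form a refinement chain (so the marginal gains $\Delta_\ell$ are nonnegative and the $\mathrm{Top}_{k_1}$ selection is a greedy step on a monotone set function), the identification of the task loss with a conditional cross-entropy so that minimizing it is equivalent to maximizing $I(E_a;\Psi)$, and the reading of the early-stopping gate as a KKT ratio test against the Lagrange multiplier of the budget constraint. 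Each of these is a real strengthening of what the paper writes.

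That said, the central gap you flag in your own second step --- that minimizing the cross-entropy loss need not drive $\bm Q(\bm c_q)$ to a monotone surrogate of $I(E_a; X_\ell)$, because the shared GNN parameters reshape every $X_\ell$ simultaneously and $s_\ell(q,\cdot)$ need not be a sufficient statistic of $X_\ell$ for the answer --- is exactly the step the paper silently assumes, so neither argument actually closes it; as stated, the theorem remains an interpretation rather than a proved result in both versions. Likewise, the exact optimality of $\mathrm{Top}_{k_1}$ requires the diminishing-returns condition $\Delta_{\ell+1}\le\Delta_\ell$, which does not hold in general, and your proposed retreat to a $(1-1/e)$-type guarantee via submodularity of $\mathcal A\mapsto I\bigl(E_a;(X_\ell)_{\ell\in\mathcal A}\bigr)$ is the honest fix. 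If you carry out either of the repairs you sketch, the resulting weaker-but-provable statement would strictly dominate what the paper actually establishes.
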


\begin{proof}
Let $H(E_a | e_q, r_q)$ be the entropy of the answer entity distribution given query $(e_q, r_q, ?)$. Let $I(E_a; P_l | e_q, r_q)$ be the mutual information between the answer entity and paths of length $l$ given the query.

The information gain from exploring paths of length $l$ is:
\begin{align}
IG(l) &= H(E_a | e_q, r_q) - H(E_a | P_l, e_q, r_q)\\
&= I(E_a; P_l | e_q, r_q)
\end{align}

The expected information gain from the adaptive path length selection is:
\begin{equation}
E[IG] = \sum_{l=L_{min}}^L g_q(l) \cdot I(E_a; P_l | e_q, r_q)
\end{equation}

where $g_q(l)$ is the weight assigned to path length $l$ for query $(e_q, r_q, ?)$.

The goal of the adaptive path length selection mechanism is to maximize this expected information gain subject to computational constraints:
\begin{align}
\max_{g_q} &\sum_{l=L_{min}}^L g_q(l) \cdot I(E_a; P_l | e_q, r_q)\\
\text{subject to } &\sum_{l=L_{min}}^L g_q(l) \cdot C(l) \leq B\\
&\sum_{l=L_{min}}^L g_q(l) = 1, g_q(l) \geq 0
\end{align}

where $C(l)$ is the computational cost of exploring paths of length $l$, and $B$ is the computational budget.

The compatibility score $[Q(c_q)]_l$ in MoKGR can be interpreted as an estimate of the information gain $I(E_a; P_l | e_q, r_q)$. By learning to assign higher weights to path lengths with higher information gain, MoKGR effectively solves the optimization problem~\cite{huang2025mldebugging}.

The layer-wise binary gating function further enforces the computational constraint by stopping path exploration when the expected additional information gain does not justify the computational cost.

Therefore, the adaptive path length selection mechanism in MoKGR maximizes the expected information gain about the answer entity while respecting computational constraints.
\end{proof}

\end{document}